\documentclass{article} 

\usepackage{iclr2027_conference,times}

\usepackage{amsmath,amsfonts,bm}

\def\eqref#1{equation~\ref{#1}}

\def\1{\bm{1}}

\DeclareMathAlphabet{\mathsfit}{\encodingdefault}{\sfdefault}{m}{sl}
\SetMathAlphabet{\mathsfit}{bold}{\encodingdefault}{\sfdefault}{bx}{n}

\usepackage{hyperref}
\usepackage{url}

\usepackage{algorithm}
\usepackage{algorithmic}
\usepackage{amsmath}
\usepackage{amssymb}
\usepackage{comment}
\usepackage{booktabs}
\usepackage{graphicx}
\usepackage{hyperref}
\usepackage{xcolor}
\usepackage{dcolumn}
\usepackage{tikz}
\usetikzlibrary{positioning,arrows.meta,calc}
\title{\ourlong}
\usepackage{cleveref}
\usepackage{wrapfig}
\usepackage{amsthm}
\theoremstyle{definition}
\newtheorem{definition}{Definition}
\newtheorem{proposition}{Proposition}
\usepackage{longtable}
\usepackage{soul}
\usepackage{url}
\usepackage[utf8]{inputenc}
\usepackage{xspace}
\usepackage{graphicx}
\usepackage{amsmath}
\usepackage{amssymb}
\usepackage{booktabs}
\usepackage[switch]{lineno}
\usepackage{xcolor}
\usepackage{listings}
\usepackage{float}
\usepackage{fontawesome5}
\usepackage{colortbl}
\usepackage{pgffor}
\usepackage{enumitem}
\usepackage{multirow}
\usepackage{arydshln}
\usepackage{mdframed}

\usepackage{pifont}

\usepackage{tikz}

\usepackage{circledsteps}

\newcommand{\Dset}{\mathcal{D}}                  
\newcommand{\Df}{\mathcal{D}_f}                  
\newcommand{\Dr}{\mathcal{D}_r}                  
\newcommand{\Alg}{\mathcal{A}}                   
\newcommand{\Unl}{\mathcal{U}}                   
\newcommand{\pth}{\pi_{\theta}}                  
\newcommand{\porig}{\pi_{\mathrm{o}}}            
\newcommand{\punl}{\pi_{\mathrm{u}}}             
\newcommand{\pretr}{\pi_{\mathrm{retr}}}         
\newcommand{\lossf}{\ell_{f}}                    
\newcommand{\lossr}{\ell_{r}}                    

\definecolor{aliceblue}{rgb}{0.94, 0.97, 1.0} 
\definecolor{azure(colorwheel)}{rgb}{0.0, 0.5, 1.0} 
\definecolor{aureolin}{rgb}{0.99, 0.93, 0.0} 

\lstdefinelanguage{json}{
    basicstyle=\ttfamily\scriptsize, 
    numbers=left, 
    numberstyle=\tiny\color{gray}, 
    numbersep=-8pt, 
    xleftmargin=0em, 
    columns=flexible, 
    breaklines=true,
    frame=single,
    backgroundcolor=\color{aureolin!4},
    literate=
      *{"class"}{{\textcolor{blue}{"class"}}}{1}
      {"parameters"}{{\textcolor{teal}{"parameters"}}}{1}
      {"partitions"}{{\textcolor{violet}{"partitions"}}}{1}
      {"data"}{{\textcolor{red}{"data"}}}{1}
      {"unlearners"}{{\textcolor{red}{"unlearners"}}}{1}
      {"predictor"}{{\textcolor{red}{"predictor"}}}{1}
      {"evaluator"}{{\textcolor{red}{"evaluator"}}}{1}
}

\lstdefinelanguage{myPython}[]{Python}{
    basicstyle=\ttfamily\scriptsize, 
    numbers=left, 
    numberstyle=\tiny\color{gray}, 
    xleftmargin=0em,
    breaklines=true,
    frame=single,
    backgroundcolor=\color{aureolin!4},
    stringstyle=\color{purple},
    commentstyle=\color{teal}
}

\mdfdefinestyle{stebox1}{%
    linecolor=black!70,
    linewidth=1.2pt,
    leftmargin=0cm,
    rightmargin=0cm,
    roundcorner=2pt,
    innerleftmargin=10pt,
    innerrightmargin=10pt,
    innertopmargin=6pt,
    innerbottommargin=6pt,
    topline=false,
    bottomline=false,
    rightline=true,
    backgroundcolor=gray!8
}

\usepackage{subcaption}

\usepackage{adjustbox}  
\usepackage[most]{tcolorbox}
\usepackage{xspace}
\newcounter{takeaway}

\newlength{\rqlabelwidth}
\newtcolorbox{rqinner}[1][]{%
  colback=black!5,          
  colframe=black!5,        
  boxrule=1pt,
  arc=0pt,
  left=6pt, right=6pt, top=5pt, bottom=5pt,
  boxsep=0pt,
  #1
}

\newcommand{\rqbox}[1]{%
  \par\medskip\noindent
  \refstepcounter{takeaway}%
  \adjustbox{valign=c}{%
    \makebox[\rqlabelwidth][c]{%
      \rotatebox{90}{\footnotesize\bfseries \#\thetakeaway}%
    }%
  }%
  \adjustbox{valign=c}{%
    \begin{minipage}{\dimexpr\linewidth-\rqlabelwidth\relax}
      \begin{rqinner}
        \itshape #1
      \end{rqinner}
    \end{minipage}%
  }%
  \par\medskip
}

\newcommand{\cta}[1]{%
  \par\medskip\noindent
  \refstepcounter{takeaway}%
  \adjustbox{valign=c}{%
    \makebox[\rqlabelwidth][c]{%
      \rotatebox{90}{\footnotesize\bfseries Call-to-Action}%
    }%
  }%
  \adjustbox{valign=c}{%
    \begin{minipage}{\dimexpr\linewidth-\rqlabelwidth\relax}
      \begin{rqinner}
        \itshape #1
      \end{rqinner}
    \end{minipage}%
  }%
  \par\medskip
}

\iclrfinalcopy

\author{Bardh Prenkaj\thanks{Equal contribution} \\
Sapienza University of Rome\\
\texttt{prenkaj@di.uniroma1.it}
\And
Andrea D'Angelo\footnotemark[1],\, Davide Mottin\\
Aarhus University\\
\texttt{\{andrea,davide\}@cs.au.dk}
\And
Federico Fontana, Davide Gabrielli, Paola Velardi\thanks{Paola Velardi is also affiliated with ISTC-CNR.},\, Stefano Faralli\\
Sapienza University of Rome\\
\texttt{\{fontana.f,gabrielli.d,velardi,faralli\}@di.uniroma1.it}
}

\newcommand\our[0]{CRU}
\newcommand\ourlong[0]{Causal Routing for Unlearning}

\begin{document}
 
\maketitle

\begin{abstract}
LLMs cannot forget the way we delete a file. Strangely, we are asked to remove something that was never put anywhere in particular. What the model took from a piece of text is now smeared across billions of weights. Existing methods rewrite all of them to change one thing, and none of them say which part produced that change. To address this, we introduce Causal Routing for Unlearning (CRU) by asking where the concept is expressed in the model and suppressing only that part. One untrained forward pass over the forget set ranks neurons by how their activations vary. Then, small routing modules on those neurons gate and suppress only the concepts that need to be forgotten. In CRU, the base model is frozen, and any change in behavior is caused only by the gated neurons; hence, why the routing is causal. Due to our parameter efficiency (only $\sim$$0.01\%$ as many parameters as the base model), unlearning a concept costs $14$ GiB, whereas the baselines require $71$ GiB. On TOFU, CRU is indistinguishable from the retained model ($p > 0.05$, KS test) and is never Pareto-dominated, whereas every compared baseline matches its forgetting on the larger-forget batches only by collapsing utility. On RWKU, it achieves an adversarial-probe recall of $0.052$, compared to $0.250$ for the strongest baseline, meaning the knowledge is gone, not merely harder to reach. Thus, deciding on the intervention at query time, rather than fixing it beforehand, is the axis along which we argue that unlearning should proceed.
\end{abstract}
 
\section{Introduction}
\label{sec:intro}

A model cannot forget (unlearn) the way we delete a file. If we delete a record from a database, we know exactly where it was: we delete the row, and the data disappears. A language model stores nothing at a given address. What it learned from a sentence is distributed across billions of weights, tangled with everything else it knows, and there is no row to strike. This is why the right to be forgotten \cite{gdpr_article17,ccpa_california_ag}, which is trivial to honor in a traditional database, becomes, inside a trained model, a genuinely strange request: \textit{we are asked to remove something that was never put anywhere in particular.}

The honest solution is to retrain the model on everything except the data we want to forget. Yet training an LLM end-to-end is infeasible for any small or medium-sized research lab or company. The only practical alternative is to navigate the model's weight and concept space and update it so that the forget set can no longer be retrieved.
 

Existing methods tackle this by pushing probability mass off the forget set: by maximizing loss on it \citep{yao2024large}, saturating that objective so the model does not diverge \citep{npo,simnpo}, redirecting the query to a refusal \citep{rafailov2023direct}, acting on the context alone \citep{pawelczyk2023context}, or rewarding the absence of the concept \citep{zaradoukas2026reinforcement}. The methods share a common trait: \textit{they rewrite the whole model to change one thing}. Even when the change is the intended one, nothing in the procedure tells us which part of the network produced it, so the result is expensive to obtain, hard to reconstruct, and hard to reuse for a second concept.

A second line of work leaves the weights untouched. NeuMuter \citep{hou2025decoupling} trains a mask over a small set of neurons with the model frozen, but the mask is fixed once trained, so it applies to every input alike. DSG \citep{muhamed2025saes} and GUARD-IT \citep{turani2026inference} do respond to the input, but only through a predefined threshold and an intervention magnitude. This second family shares the same shortcoming as the first: \textit{the intervention is decided before the model is ever queried}. In other words, there is no mechanism distinguishing a prompt that reaches for the forgotten concept from one that merely passes nearby, so \textit{suppression is spent uniformly whether or not it is needed} (Figure~\ref{fig:positioning}). Consequently, model utility drops lower than it needs to.
\begin{wrapfigure}{r}{0.5\textwidth}
\centering
\resizebox{\linewidth}{!}{%
\begin{tikzpicture}[
  font=\small,
  meth/.style={draw, rounded corners=2pt, line width=0.4pt, inner sep=4pt,
               align=center, fill=gray!8, text=black!80, font=\footnotesize},
  ours/.style={draw, rounded corners=2pt, line width=0.9pt, inner sep=5pt,
               align=center, fill=teal!12, draw=teal!60!black, text=black,
               font=\footnotesize\bfseries},
  axlab/.style={font=\footnotesize\itshape, text=black!65, align=center}
]
\draw[gray!45, line width=0.4pt, rounded corners=3pt] (0,0) rectangle (11,6);
\draw[gray!45, line width=0.4pt, dashed] (3.667,0) -- (3.667,6);
\draw[gray!45, line width=0.4pt, dashed] (7.333,0) -- (7.333,6);
\draw[gray!45, line width=0.4pt, dashed] (0,3) -- (11,3);

\node[axlab, rotate=90] at (-0.45,4.5) {base frozen};
\node[axlab, rotate=90] at (-0.45,1.5) {base rewritten};
\node[axlab] at (1.83,-0.75) {fixed after\\training};
\node[axlab] at (5.50,-0.75) {input-conditional,\\rule-based};
\node[axlab] at (9.17,-0.75) {input-conditional,\\learned};

\node[meth] at (1.83,4.70) {NeuMuter};
\node[meth] at (1.83,3.90) {ICU};

\node[meth] at (5.50,5.2) {DSG};
\node[meth] at (5.50,4.45) {CAST};
\node[meth] at (5.50,3.70) {GUARD-IT};

\node[ours] at (9.17,4.45) {CRU (ours)};

\node[meth] at (1.83,2.10) {GA, GD, NPO,\\SimNPO, PURGE};
\node[meth] at (1.83,0.95) {SalUn, WAGLE,\\CLUE, LUNAR};
\end{tikzpicture}
}
\caption{Unlearning methods positioned by whether the base model is modified or not; coarsely inspired from \cite{yoon2026position}.}
\label{fig:positioning}
\end{wrapfigure}

To enforce the distinction between concepts involved in the forget set and the rest, our method, \ourlong\ (\our), splits unlearning into \emph{localization}, i.e., identifying the neurons that encode forget-set concepts, and \emph{gating} of their activations. 

\noindent\textbf{Localization.}
\our\ collects hidden states from the last few decoder blocks over a sample of the forget set.  We note the tension with weight-editing work that locates factual associations in middle layers \citep{meng2022locating}: those methods intervene on where a fact is \emph{retrieved}, whereas \our\ intervenes on where it is \emph{expressed}. Hence, \our\ pools each example over its non-padding positions and ranks neurons by how much that pooled activation varies from one example to the next; neurons above a percentile threshold become the candidate set for that block. This step determines \emph{where} \our\ may intervene, restricting it to a small fraction of neurons in a few blocks, and entails only one forward pass and no training. It does not by itself make the intervention specific to the forget set; that is enforced by the gating objective (\Cref{eq:gating}), which penalizes closing gates on retain inputs and thus determines \emph{when} each selected neuron is suppressed.

\noindent\textbf{Gating.} At each of those blocks, \our\ attaches a small MLP ($\sim$0.01\% of total parameters of the base model) that reads the block's output hidden state and emits, for every selected neuron at every token position, a value in $[0,1]$ that multiplies it. Training pushes those values toward $0$ for forget-set inputs and toward $1$ for the remaining inputs; only the MLPs receive gradients, and the objective is defined on the gate values themselves rather than on the model's output distribution. Because the base model remains entirely frozen, there is no parameter drift, and all behavioral changes are attributable to the gated neurons.

\section{Preliminaries}\label{sec:preliminaries}

\textbf{Models and data.}
We consider an autoregressive language model $\pth(y \mid x)$ with parameters $\theta \in \Theta$, mapping prompt $x$ to response distribution $y$ via negative log-likelihood loss:
\begin{equation}
    \ell(y \mid x; \theta) \;=\; -\log \pth(y \mid x) \;=\; -\sum\nolimits_{t=1}^{|y|} \log \pth\!\left(y_t \mid x, y_{<t}\right).
\end{equation}
A learning algorithm $\Alg$ maps a dataset to parameters. For training set $\Dset$, the reference (original) model parameters before unlearning are $\theta_{\mathrm{o}} = \Alg(\Dset)$, yielding distribution $\porig$.
 
\textbf{Machine unlearning.}
Let $\Df \subseteq \Dset$ be the \emph{forget set}, the data whose influence is to be removed, and $\Dr = \Dset \setminus \Df$ the \emph{retain set}. Machine unlearning asks for a model that behaves as though $\Df$ had never been seen~\citep{cao2015towards}. The construction that achieves this by definition is retraining.
 
\begin{definition}[Exact unlearning]\label{def:retrain}
Given learning algorithm $\Alg$ and retain set $\Dr$, the \emph{retrained model} $\pretr$ has parameters $\theta_{\mathrm{retr}} = \Alg(\Dr)$.
\end{definition}%
$\pretr$ serves as the \emph{golden reference point} by excluding $\Df$. Since retraining $\Alg(\Dr)$ is computationally intractable for LLMs, unlearning aims to approximate $\pretr$ directly \citep{bourtoule2021machine}.

\begin{definition}[Approximate unlearning]\label{def:unlearning}
Given $\porig$, $\Df$, and $\Dr$, an unlearning algorithm $\Unl$ outputs an \emph{unlearned model} $\punl = \Unl(\porig, \Df, \Dr) \approx \pretr$ without retraining.
\end{definition}

While strict definitions demand $(\epsilon, \delta)$-indistinguishability between $\punl$ and $\pretr$ \citep{guo2020certified, sekhari2021remember}, such certificates are impractical for non-convex LLMs trained via stochastic optimization \citep{thudi2022necessity}. Furthermore, real-world deployments lack full access to $\Dset$, $\Dr$, and $\pretr$. Evaluation, therefore, relies on proxy metrics measuring \emph{forget quality} on $\Df$ and \emph{utility retention} on $\Dr$ and general benchmarks \citep{maini2024tofu,shi2025muse}.

\textbf{What $\Unl$ is allowed to do.}
Existing unlearning methods differ along two independent axes: the \emph{objective} optimized and the \emph{scope} of the intervention. 

\emph{Objective.} Unlearning typically optimizes a regularized loss balancing a forget term on $\Df$ and forget loss $\ell_f$ against a utility retention term on $\Dr$ and loss $\ell$ \citep{liu2025rethinking}:
\begin{equation}\label{eq:unlearning-objective}
    \min_{\theta \in \Theta} \;\;
    \underbrace{\mathbb{E}_{(x,y) \sim \Df}\!\left[\lossf(y \mid x; \theta)\right]}_{\text{forget}}
    \;+\; \lambda \,
    \underbrace{\mathbb{E}_{(x,y) \sim \Dr}\!\left[\ell(y \mid x; \theta)\right]}_{\text{retain}},
\end{equation}
where $\lambda > 0$ controls the trade-off, and state-of-the-art methods vary choices of $\lossf$ and $\lambda$ (\S\ref{app:sota}).

\emph{Scope.} While \Cref{eq:unlearning-objective} permits updates across all of $\Theta$, any parameter-space unlearning operator can be generalized as
\begin{equation}\label{eq:scope}
    \theta_{\mathrm{u}} \;=\; \theta_{\mathrm{o}} + \mathbf{w} \odot \Delta,
    \qquad \mathbf{w} \in [0,1]^{d},
\end{equation}
where $\Delta$ is the objective update, $d = \dim \Theta$, $\odot$ denotes elementwise multiplication, and $\mathbf{w}$ selects which coordinates receive updates.

\section{Related Work}
\label{sec:related}

\subsection{Base Model Update}

\textbf{Global updates.} Gradient Ascent (GA) \citep{yao2024large} performs gradient ascent on the cross-entropy loss of the forget set, lowering its likelihood; the objective is unbounded, so parameters diverge and coherence collapses. Gradient Difference (GD) \citep{liu2022continual} adds a retain accuracy penalty to balance this. 
DPO \citep{rafailov2023direct} trains the model to prefer a refusal (e.g., ``I don't know'') over the original answer to each forget question. NPO \citep{npo} pushes down the probability of the original answers, measured relative to the model before unlearning, thereby keeping the loss bounded. SimNPO \citep{simnpo} drops this comparison with the original model, which otherwise spends the same effort on every example regardless of how hard it is to forget. PURGE \citep{zaradoukas2026reinforcement} uses RL where the model is penalized whenever its output mentions the target concept, while a KL term keeps it close to the original one to preserve utility. Since the reward only checks what the model says, it remains unclear whether the knowledge is erased or merely hidden. All global methods fix $\mathbf{w} = \mathbf{1}$, thus disregarding the input. 

\textbf{Localized updates.} A second group edits only a subset of the weights (i.e., $\mathbf{w} \neq \mathbf{1}$). SalUn \citep{fan2024salun} updates only the weights whose gradients on the forget set are largest. WAGLE \citep{jia2024wagle} estimates how much each weight contributes to unlearning and applies standard objectives (e.g., GradDiff, NPO) only to the most influential ones. CLUE \citep{chen2025clue} traces the circuits used by the forget set, separates neurons specific to it from those shared with other knowledge, and fine-tunes only the former. LUNAR \citep{lunar} edits the MLP output projections of a few layers so that forget inputs produce activations resembling those of a refusal. Crucially, while these methods set $\mathbf{w} \neq \mathbf{1}$, their modified weights apply indiscriminately to every input.

\subsection{Frozen Base Model}
\label{sec:frozen}
\noindent\textbf{Input-independent interventions.} NeuMuter \citep{hou2025decoupling} trains a fixed mask over $\sim$$1\%$ of feed-forward neurons on frozen weights, applying it uniformly to all inputs. Parameter-free ICU \citep{pawelczyk2023context} prepends label-flipped examples, but context-bound interventions can be bypassed when reasoning traces reconstruct forgotten concepts \citep{zaradoukas2026reinforcement}.

\noindent\textbf{Rule-based conditional interventions.} Rule-based methods steer activations only when heuristic triggers fire. CAST \citep{lee2025programming} compares hidden states to a condition vector, applying a fixed steering vector to all generated tokens if a grid-searched threshold is met on the prompt. DSG \citep{muhamed2025saes} and GUARD-IT \citep{turani2026inference} follow the same paradigm, clamping sparse autoencoder features or prototype directions to fixed magnitudes upon meeting tuned thresholds. In all cases, trigger thresholds and intervention magnitudes are user-defined hyperparameters.

We refer the reader to \S\ref{app:objective} for details on how the SoTA operates under \Cref{eq:unlearning-objective}, and how \our\ differs from it according to \Cref{eq:scope} (see \S\ref{app:scope}).

\section{\ourlong}
\label{sec:cru}
\begin{wrapfigure}{r}{0.5\linewidth}
\centering
\begin{tikzpicture}[
  font=\scriptsize,
  blk/.style={draw, rounded corners=1pt, minimum width=0.24cm,
              minimum height=0.44cm, line width=0.3pt, fill=gray!12},
  gblk/.style={blk, minimum width=0.30cm, fill=teal!20, draw=teal!60!black},
  rsq/.style={draw, rounded corners=1pt, minimum width=0.26cm,
              minimum height=0.24cm, line width=0.35pt,
              fill=violet!18, draw=violet!60!black},
  box/.style={draw, rounded corners=2pt, line width=0.4pt, align=center,
              inner sep=2.5pt, fill=gray!6},
  rl/.style={draw, rounded corners=2pt, line width=0.45pt, align=center,
             inner sep=1.2pt, minimum width=1.45cm, minimum height=0.28cm,
             fill=violet!12, draw=violet!60!black},
  cl/.style={draw, rounded corners=1pt, minimum width=0.30cm,
             minimum height=0.30cm, line width=0.3pt, fill=gray!10},
  ccl/.style={cl, fill=teal!18, draw=teal!60!black},
  scl/.style={cl, fill=orange!12, draw=orange!70!black},
  gcl/.style={draw, rounded corners=1pt, minimum width=0.46cm,
              minimum height=0.34cm, line width=0.3pt},
  ar/.style={-latex, line width=0.35pt, gray!75},
  lb/.style={text=black!62}
]

\foreach \x in {0.35,0.62,0.89} \node[blk] at (\x,10.05) {};
\node[lb] at (1.10,10.05) {$\cdots$};
\foreach \x in {1.34,1.66,1.98,2.30} \node[gblk] at (\x,10.05) {};
\foreach \x in {1.34,1.66,1.98,2.30} {
  \node[rsq] at (\x,9.48) {};
  \draw[ar] (\x,9.83) -- (\x,9.60);
}
\draw[gray!45, line width=0.3pt, dashed, rounded corners=2pt]
  (0.20,9.79) rectangle (2.52,10.31);
\node[lb] at (1.36,10.56) {frozen blocks $1\dots L$};
\node[lb, anchor=west, text=violet!45!black] at (2.70,9.48) {routers (trained)};
\draw[ar] (2.52,10.05) -- (2.86,10.05);
\node[box] (hd) at (3.55,10.05) {LM head};
\draw[ar] (4.24,10.05) -- (4.58,10.05);
\node[box] (lg) at (5.20,10.05) {logits};

\draw[gray!50, line width=0.3pt, dashed] (1.30,9.34) -- (0.15,8.96);
\draw[gray!50, line width=0.3pt, dashed] (2.34,9.34) -- (6.40,8.96);

\draw[gray!40, line width=0.4pt, rounded corners=3pt] (0.15,4.70) rectangle (6.40,8.96);
\node[lb, anchor=west] at (0.30,8.75) {inside a selected block};

\node[box, minimum width=2.2cm] (bl) at (1.45,8.15) {self-attention\\MLP};
\node[lb, anchor=west] at (2.70,8.15) {frozen};
\draw[ar] (1.45,7.75) -- (1.45,7.45);

\foreach \x/\s in {0.50/cl, 0.88/ccl, 1.26/cl, 1.64/ccl, 2.02/cl, 2.40/ccl}
  \node[\s] at (\x,7.25) {};
\node[lb, anchor=west] at (0.35,7.62) {$\mathbf{h}_l$};
\node[lb, anchor=west] at (2.72,7.02) {\textcolor{teal}{$\mathcal{C}_l$}};

\node[lb, text=violet!45!black] at (4.95,7.52) {$R_l$ (trained)};
\draw[ar] (2.58,7.25) -- (3.60,7.25) -- (3.60,7.10) -- (4.22,7.10);
\node[rl] (l1) at (4.95,7.10) {\texttt{Linear}};
\node[rl] (ac) at (4.95,6.76) {\texttt{ReLU}};
\node[rl] (l2) at (4.95,6.42) {\texttt{Linear}};
\node[rl] (sg) at (4.95,6.08) {\texttt{Sigmoid}};
\draw[gray!45, line width=0.3pt, dashed, rounded corners=2pt]
  (4.12,5.90) rectangle (5.78,7.28);

\draw[ar] (1.45,7.10) -- (1.45,5.95);
\draw[ar] (4.95,5.90) -- (4.95,5.78) -- (1.67,5.78);
\node[lb] at (3.10,5.98) {$\mathbf{w}_l \in [0,1]^|$\textcolor{teal}{$^{\mathcal{C}_l}$}$^|$};
\node[draw, circle, inner sep=1pt, line width=0.4pt] at (1.45,5.78) {$\odot$};

\draw[ar] (1.45,5.61) -- (1.45,5.41);
\foreach \x/\s in {0.50/cl, 0.88/scl, 1.26/cl, 1.64/scl, 2.02/cl, 2.40/scl}
  \node[\s] at (\x,5.24) {};
\node[lb, anchor=west] at (2.62,5.24) {$\mathbf{h}_l'$};
\draw[ar] (1.45,5.07) -- (1.45,4.85);
\node[lb, anchor=west] at (1.62,4.88) {to block $l{+}1$};

\draw[gray!40, line width=0.4pt, rounded corners=3pt] (0.15,2.20) rectangle (6.40,4.45);
\node[lb, anchor=west] at (0.30,4.22) {two example inputs};

\node[lb, anchor=east] at (0.95,3.68) {forget};
\foreach \x/\v in {1.25/0.02, 2.10/0.11, 2.95/0.00}
  \node[gcl, fill=red!18, draw=red!65!black] at (\x,3.68) {\v};
\node[lb, anchor=east] at (0.95,2.96) {retain};
\foreach \x/\v in {1.25/0.97, 2.10/0.94, 2.95/0.99}
  \node[gcl, fill=green!12, draw=green!55!black] at (\x,2.96) {\v};
\node[lb, anchor=west] at (3.25,3.32) {$\mathbf{w}_l$};

\node[box, align=center] at (5.05,3.32)
  {$\mathcal{L}_{\text{total}}=$\\
   $\mathcal{L}_{\mathrm{suppress}}+\lambda\mathcal{L}_{\mathrm{pass}}$\\[1pt]
   on gate values,\\not on logits};
\end{tikzpicture}
\caption{\textbf{\our\ applied to a frozen decoder.} Each of the last $k$ blocks pairs with a routing MLP, which reads the output hidden state $\mathbf{h}_l$ and emits a gate in $[0,1]$ for every selected neuron at every token position. Gates multiply the $\mathcal{C}_l$ coordinates of $\mathbf{h}_l$; all other coordinates pass through at $1$. Training minimizes the mean gate value on forget batches and drives it to one on retain batches: the objective is defined on the gate outputs, so no gradient passes through the base model.}
\label{fig:cru}
\end{wrapfigure}
\our\footnote{\url{https://anonymous.4open.science/r/cru-iclr-4E56/README.md}} is very simple (see \Cref{fig:cru}). It operates in two phases. The first selects, for each block in a small set, the neurons whose activations vary most over the forget set. The second attaches a small routing network to each of those blocks and trains it to attenuate those neurons on forget-set inputs while leaving them intact otherwise. The base model is frozen in both phases.

\subsection{Phase 1: Localizing Concept Neurons}

\textbf{Which blocks.} Earlier layers in transformers carry surface and syntactic features, while later layers carry semantic ones \citep{tenney2019bert}; feed-forward layers act as key-value memories whose outputs refine the distribution over the vocabulary \citep{geva2021transformer}; and neurons expressing a given fact are concentrated in the last layers \citep{dai2022knowledge}. Decoding methods exploit the same asymmetry, contrasting late against early layers to amplify factual content \citep{chuang2024dola}. Late blocks are, therefore, where a concept is most nearly expressed in the form the output head consumes, which is what a multiplicative gate can act on. 
We, thus, collect activations from the last $k$ decoder blocks.

\noindent\textbf{Pooling.} For a forget-set example $i$ with token mask ${m}^{(i)}_t \in \{0,1\}$, we take the mean hidden state of block $l$ over non-padding positions,
\begin{equation}\label{eq:pooling}
\mathbf{\bar{a}}^{(i)}_l \;=\; \frac{\sum_t {m}^{(i)}_t \, \mathbf{h}^{(i)}_{l,t}}{\sum_t {m}^{(i)}_t}, \qquad \mathbf{h}^{(i)}_{l,t} \in \mathbb{R}^{d},
\end{equation}%
where $\mathbf{h}_{l,t}$ is the output hidden state of block $l$ at position $t$ and $d$ is the model width. Padding is excluded throughout: sequence length is an artifact of batching, and averaging over it would let downstream statistics key on how much a corpus was padded rather than on its content.

\noindent\textbf{Variance as a relevance score.} For each block, we score neuron $j$ by its variance across forget-set examples,
\begin{equation}
\sigma^2_l(j) \;=\; \operatorname{Var}_i\!\left(\mathbf{\bar{a}}^{(i)}_l[j]\right),
\end{equation}%
estimated over a fixed number of forget batches. A neuron whose pooled activation barely moves across examples of the concept is unlikely to be carrying it. This is a cheap surrogate for the attribution methods used elsewhere in the localization literature -- integrated gradients \citep{dai2022knowledge}, task predictivity \citep{wang2022finding}, activation probability \citep{tang2024language}, or neuron-level attribution \citep{yu2024neuron}. It requires a single pass and no backward computation.

\noindent\textbf{Percentile thresholding.} We keep the neurons above the $p$-th percentile of the per-block variance distribution,
\begin{equation}
\mathcal{C}_l \;=\; \bigl\{\, j : \sigma^2_l(j) > q_p\!\left(\sigma^2_l\right) \bigr\}.
\end{equation}%
A percentile is robust to the shape of the variance distribution, which matters here. In particular, activations in large transformers are dominated by a small number of high-magnitude outlier dimensions \citep{timkey2021rogue,dettmers2022int8}, so a threshold defined by mean and standard deviation would be set almost entirely by those coordinates. 

\subsection{Phase 2: Learned Routing}\label{sec:phase2}
Phase~1 asks which neurons carry the concept, and answers it with a statistic pooled over each example. Phase~2 asks a narrower question at every token: \textit{given what is passing through this position now, should those neurons be closed?} Selection is therefore made once per concept, while the intervention is decided per position.
 
\noindent\textbf{Where the gate acts.} Each selected block receives its own routing module, applied to that block's output hidden state -- that is, to the residual stream, the channel through which blocks communicate \citep{elhage2021mathematical} and the site at which activation-space interventions are conventionally applied \citep{turner2023steering}. Acting there rather than inside the block keeps the intervention independent of the block's internal parameterization, so the same construction applies unchanged across the model families we test, even though their feed-forward blocks differ.

\noindent\textbf{Router architecture.} Let $\mathcal{C}_l = \{j_1 < j_2 < \dots < j_{|\mathcal{C}_l|}\}$ for the selected neurons of block $l$, indexed in increasing order. For that block, we define $R_l : \mathbb{R}^{d} \to [0,1]^{|\mathcal{C}_l|}$,
\begin{equation}\label{eq:router}
    \mathbf{w}_{l,t} \;=\; R_l\!\left(\mathbf{h}_{l,t}\right) \;=\; \sigma\!\left(\mathbf{W}_2 \,\mathrm{ReLU}\!\left(\mathbf{W}_1 \mathbf{h}_{l,t} + \mathbf{b}_1\right) + \mathbf{b}_2\right),
\end{equation}
with $\mathbf{W}_1 \in \mathbb{R}^{d_r \times d}$,
$\mathbf{b}_1 \in \mathbb{R}^{d_r}$, $\mathbf{W}_2 \in \mathbb{R}^{|\mathcal{C}_l| \times d_r}$, $\mathbf{b}_2 \in \mathbb{R}^{|\mathcal{C}_l|}$, bottleneck width $d_r << d$. The same parameters are applied independently at every position $t$, so the module costs $|\mathcal{C}_l|$ outputs per token rather than $d$. The entry $\mathbf{w}_{l,t}[m]$ is the gate on neuron $j_m$; this correspondence between output row and neuron is a stipulation of the construction, a point we return when discussing the objective.

The gate is applied multiplicatively and only on coordinates corresponding to the neurons selected in Phase 1, yielding $\mathbf{h}'_{l,t} \in \mathbb{R}^{d}$ with
\begin{equation}\label{eq:gating}
\mathbf{h}'_{l,t}[j] \;=\;
\begin{cases}
\mathbf{w}_{l,t}[m] \cdot \mathbf{h}_{l,t}[j], & j = j_m \in \mathcal{C}_l,\\[2pt]
\mathbf{h}_{l,t}[j], & j \notin \mathcal{C}_l .
\end{cases}
\end{equation}%
\begin{wrapfigure}{r}{0.5\textwidth}
    \centering
    \includegraphics[width=\linewidth]{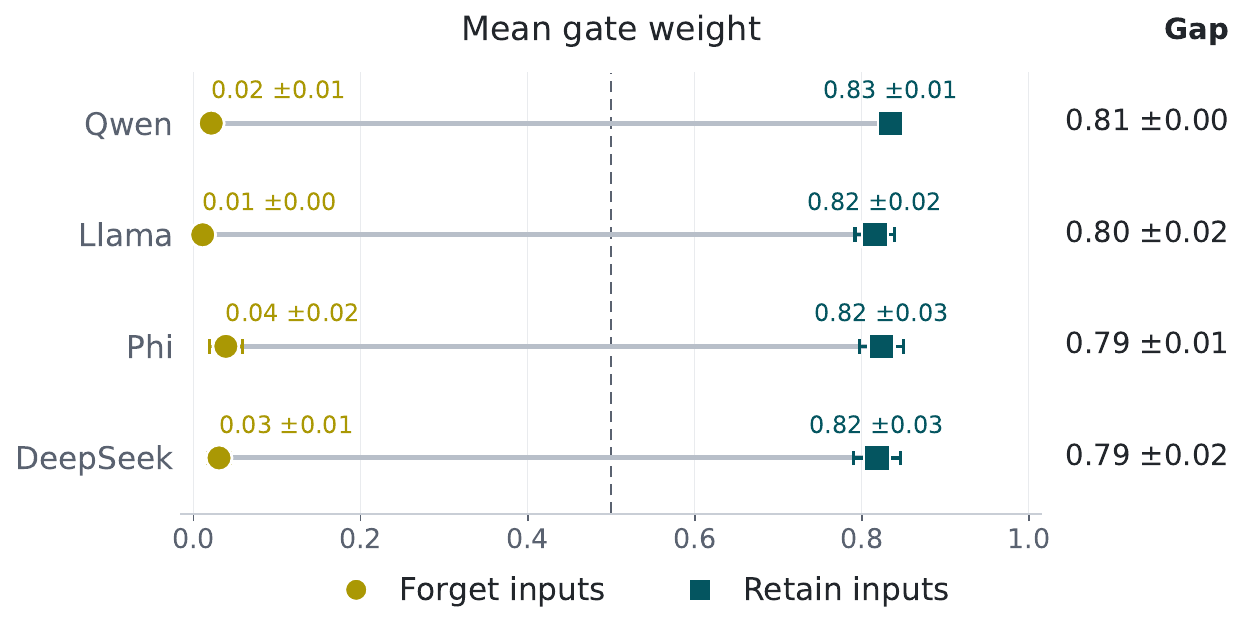}
    \caption{Mean routing gate on the forget and retain corpora, per model on \texttt{TOFU} \texttt{forget10}, averaged over 3 runs.}
    \label{fig:gate_gap}
\end{wrapfigure}
Three properties follow. The gate is \underline{multiplicative}, which is the argument for activation scaling over additive steering \citep{stoehr2024activation}; sigmoid gating of this form is standard in transformer feed-forward design \citep{dauphin2017language,shazeer2020glu}. It is \underline{soft}: because each $\mathbf{w}_{l,t}[m] \in [0,1]$ is continuous, the intervention is differentiable and admits partial suppression, unlike a hard mask. And it is \underline{input-conditional} and evaluated per token, so the same neuron may be suppressed at one position and passed at another. Together with a frozen base and a small trained module, this places \our\ in the same family as adapter-style parameter-efficient methods \citep{houlsby2019parameter,hu2022lora}, and $R_l$ plays the role a router plays in conditional computation \citep{shazeer2017outrageously}.

\noindent\textbf{Objective.} Let $\bar{w}^{\,\mathrm{f}}_l$ and $\bar{w}^{\,\mathrm{r}}_l$ denote the mean gate value of block $l$, taken jointly over the selected neurons and the non-padding positions of a forget and a retain batch respectively; padded positions are excluded, since the router is never asked to act on them at inference. With $K$ the set of gated blocks, we minimize
\begin{equation}\label{eq:objective}
\mathcal{L}_{\text{total}} \;=\; \underbrace{\frac{1}{|K|}\sum_{l \in K}
\bar{w}^{\,\mathrm{f}}_l}_{\mathcal{L}_\text{suppress}}\;+\;\lambda \,
\underbrace{\frac{1}{|K|}\sum_{l \in K}\left(\bar{w}^{\,\mathrm{r}}_l -
1\right)^{2}}_{\mathcal{L}_\text{pass}}.
\end{equation}%
We motivate the asymmetry between the two terms in \S\ref{app:loss-shape}. Two things follow from \Cref{eq:objective}. First, the objective is defined in terms of the gate values themselves and never observes the model's output. Hence, no gradient passes through the language-model head, and the base weights are untouched by construction rather than by choice. Second, the objective depends on the router only through the two scalars $\bar{w}^{\,\mathrm{f}}_l$ and $\bar{w}^{\,\mathrm{r}}_l$: any reallocation of gate mass over the selected coordinates and token positions that leaves those means fixed leaves the loss unchanged. Permutations of the router's output rows are included. The correspondence between row $m$ and neuron $j_m$ is stipulated by \Cref{eq:gating}, but \Cref{eq:objective} gives it no gradient. Hence, two routers differing by such a permutation are equally optimal while suppressing different neurons.

\noindent\textbf{What the objective can and cannot do.} The two terms are in tension only when the router can tell the two corpora apart. If it cannot, they resolve to a single value.
\begin{proposition}
\label{prop:tied}
Suppose a gate is constrained to take the same value $w$ on forget and retain inputs. Then $\mathcal{L}(w) = w + \lambda (w-1)^2$ is minimized over $\mathbb{R}$ at $w^\star = 1 - \tfrac{1}{2\lambda}$. Consequently $w^\star \le 0$ whenever $\lambda \le \tfrac{1}{2}$, and $w^\star = \tfrac{1}{2}$ at $\lambda = 1$.
\end{proposition}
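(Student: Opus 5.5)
The plan is to reduce the tied case to a one-variable quadratic and minimize it directly. If the gate takes the same value $w$ on forget and retain inputs, then in \Cref{eq:objective} every block has $\bar{w}^{\,\mathrm{f}}_l = \bar{w}^{\,\mathrm{r}}_l = w$. The averages over $K$ collapse, since each summand is identical, and the loss becomes $\mathcal{L}(w) = w + \lambda(w-1)^2$. I will state this reduction explicitly, so that the proposition's $\mathcal{L}$ is identified with $\mathcal{L}_{\text{total}}$ under the constraint.

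Next, I use that $\lambda > 0$, the standing assumption on the trade-off weight. Then $\mathcal{L}$ is a strictly convex quadratic on $\mathbb{R}$, because $\mathcal{L}''(w) = 2\lambda > 0$. Its unique global minimizer is therefore the stationary point. Setting $\mathcal{L}'(w) = 1 + 2\lambda(w-1) = 0$ gives $w^\star = 1 - \tfrac{1}{2\lambda}$. As an alternative, I could complete the square:
\begin{equation*}
\mathcal{L}(w) = \lambda\left(w - 1 + \tfrac{1}{2\lambda}\right)^2 + 1 - \tfrac{1}{4\lambda}.
\end{equation*}
This exhibits the minimizer and the minimum value at once, and it avoids any appeal to calculus.

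The two consequences are then direct substitutions. For $w^\star \le 0$ we need $1 \le \tfrac{1}{2\lambda}$, which is equivalent to $\lambda \le \tfrac12$ because $\lambda > 0$, so multiplying through by $2\lambda$ preserves the inequality. At $\lambda = 1$ we get $w^\star = 1 - \tfrac12 = \tfrac12$.

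There is no real obstacle here. The only points needing care are the reduction step and the positivity of $\lambda$, which is what makes the stationary point a minimum rather than a maximum. I would also remark that the minimization is over $\mathbb{R}$, as the statement says, and not over the gate's actual range $[0,1]$. When $\lambda \le \tfrac12$, projecting onto $[0,1]$ sends the constrained optimum to $0$, meaning every gate is fully closed. This is the interpretive point the proposition is making.
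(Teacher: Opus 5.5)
Your proposal is correct and follows the paper's proof, which likewise solves $\mathcal{L}'(w) = 1 + 2\lambda(w-1) = 0$ and uses $\mathcal{L}''(w) = 2\lambda > 0$ to certify that the stationary point is the minimum. Your explicit derivation of the consequences (multiplying through by $2\lambda > 0$ to get $w^\star \le 0 \iff \lambda \le \tfrac12$) and the completed-square form are a slightly more detailed version of the same argument.
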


\begin{proof}
$\mathcal{L}'(w) = 1 + 2\lambda(w-1) = 0$ gives $w^\star = 1 - 1/(2\lambda)$, and $\mathcal{L}''(w) = 2\lambda > 0$
\end{proof}
\Cref{prop:tied} shows that a collapsed router produces equal gate means of $1 - 1/(2\lambda)$ ($0.5$ at default $\lambda = 1$). Our empirical results diverge sharply: on \texttt{TOFU} \texttt{forget10}, gate values average $0.01$--$0.06$ on forget inputs (effectively disabling selected neurons) versus $0.79$--$0.85$ on retain inputs (\Cref{fig:gate_gap}). Thus, \our\ dynamically reads the input rather than applying fixed attenuation, distinguishing it from frozen-base methods (\S\ref{sec:related}). The retained mean ($0.82$) reflects minor activation withholding on retain inputs; unlike weight editing, this cost is naturally bounded because underlying parameters remain untouched (\S\ref{sec:results}). Finally, \Cref{prop:tied} confirms that higher retain pressure ($\lambda$) shifts the baseline optimum toward $1$, sacrificing retain attenuation first.

\section{Experiments}\label{sec:experiments}

\noindent\textbf{Benchmarks and baselines.} We evaluate \our\ on \texttt{TOFU} \citep{maini2024tofu} (\texttt{forget01} and \texttt{forget10} splits) and \texttt{RWKU} \citep{cao2024rwku}; metrics are summarized in Table~\ref{tab:benchmark_metrics}. We compare against eight SoTA baselines: GA, GD, NPO \citep{npo}, SimNPO \citep{simnpo}, ICU \citep{pawelczyk2023context}, PURGE \citep{zaradoukas2026reinforcement} (on \texttt{RWKU}), and LUNAR \citep{lunar} (via its \textit{deviation score} on its custom \texttt{TOFU} split; \S\ref{apx:full_exp_settings}). Unlike prior work targeting single architectures, we test across four model families: DeepSeek 7B, Qwen2.5 7B, Llama 3.1 8B, and Phi 3 Mini (\S\ref{apx:full_exp_settings}). \textbf{In summary, we evaluate \our\ on 2 benchmarks (4 forget sets), 4 model families, and against 7 baselines.}

\noindent\textbf{Hyperparameters.} We use the default hyperparameters for the baselines on \texttt{TOFU}  and \texttt{RWKU}. For PURGE, we use its published model checkpoints. For \our, we use the recommended learning rate for \texttt{TOFU} and perform a small grid search for \texttt{RWKU} (\S\ref{apx:hyperparameters}). We set $k=4$ and $d_r = 32$ for TOFU, $d_r=256$ for RWKU.

\subsection{Results}\label{sec:results}

\rqbox{\our\ has the best Model Utility-Forgetting trade-off on \texttt{TOFU} splits (statistically significant and not Pareto-dominated). On \texttt{RWKU}, \our\ is the only one that removes the target knowledge against all attack levels.}

\textbf{CRU achieves the best utility-forgetting trade-off across all forget sizes on TOFU.} \Cref{fig:pareto_grid} plots Model Utility against Forgetting (Rouge-L), where top-right is the ideal performance. On \texttt{forget01}, \our\ outperforms all baselines on Llama, Phi, and Qwen, while matching top baselines (NPO, SimNPO) on DeepSeek.  On \texttt{forget10}, high Forgetting scores for GA, NPO, and SimNPO are artifacts caused by utility collapsing to $0.00$--$0.21$ (\Cref{fig:pareto_grid}). For example, on Llama, NPO/SimNPO reports $0.97$ forgetting score at $0.00$ utility, whereas \our\ achieves $0.53$ utility. Overall, \our\ maintains the best trade-off without destroying generation capabilities. This trade-off advantage is statistically significant: i.e., a Friedman test \citep{demsar} across all (model, split) blocks rejects equivalence ($p \in [2.2\mathrm{e}^{-4}, 3.5\mathrm{e}^{-3}]$), with \our\ holding the top mean rank ($1.20$--$1.80$) and beating every baseline in post-hoc Wilcoxon tests ($p \le 0.039$).  Crucially, \our\ is never Pareto-dominated. 

Additionally, we found out that \citet{lunar} use a bespoke forget set split on \texttt{TOFU}. For completeness, we compare \our\ to LUNAR according to their introduced deviation score (distance from ideal point; lower is better) on their custom split. \our\ achieves $7.1$ on Qwen (averaged over 3 runs) compared to LUNAR's $13.2$ and baselines' $50.4$--$92.9$ (\Cref{fig:lunar_comparison}).

\noindent\textbf{\our\ guarantees knowledge removal rather than hiding.}
While \texttt{TOFU} evaluates fictitious targets, \texttt{RWKU} tests real-world figures entangled with general knowledge. \Cref{fig:rwku-main} compares \our\ against baselines across 20 targets on Phi-3-mini-4k-instruct. Unlike other methods that merely suppress surface forms, \our\ is the only approach that truly removes target knowledge across all three probing levels: fill-in-the-blank (Level 1), QA (Level 2), and adversarial prompts (Level 3). Averaged over all levels, \our\ reduces forget ROUGE-L recall from $0.532$ (original model) to $0.048$, outperforming SoTA and resisting adversarial extraction. Alas, this comes with a modest drop in utility on MMLU and a slightly higher vulnerability to membership inference attacks, especially in generation (\S\ref{apx:full_results}).

\begin{figure}[t]
    \centering
    \includegraphics[width=\linewidth]{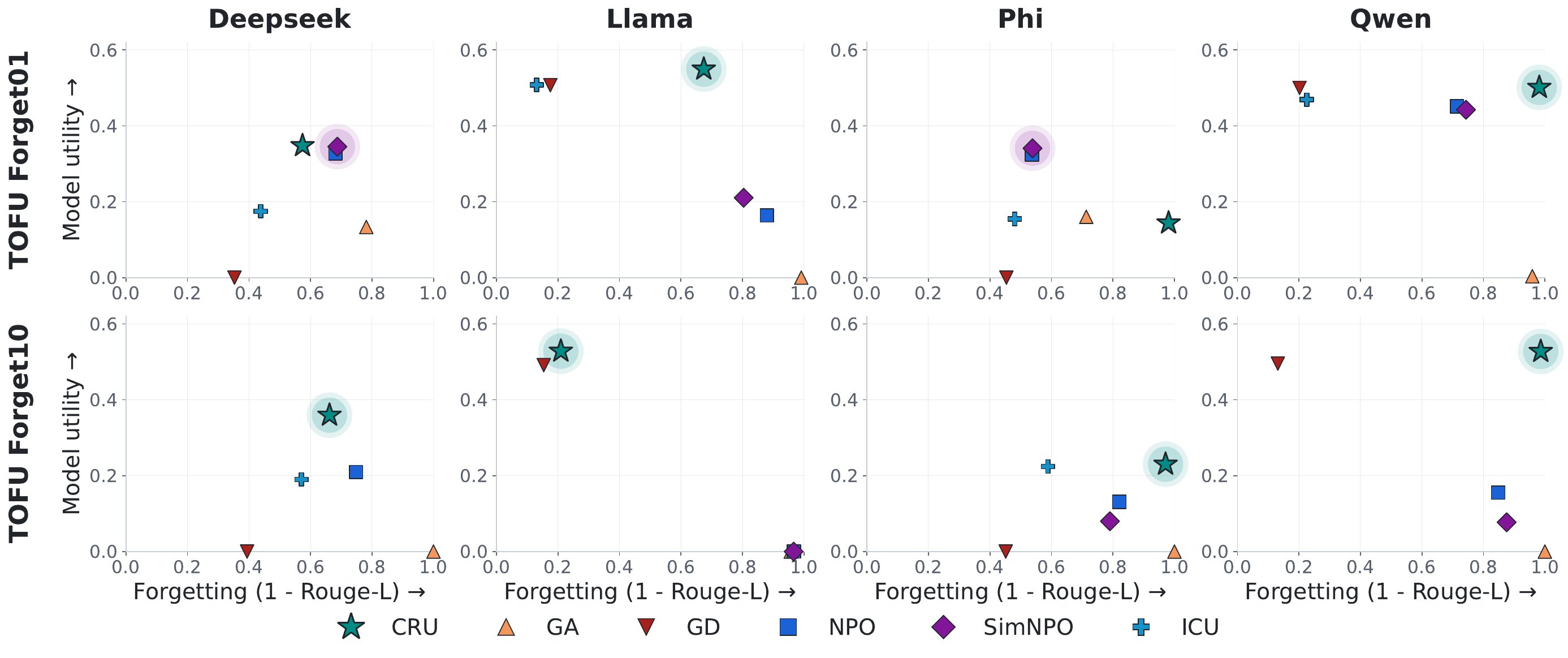}
    \caption{Model Utility and ROUGE-L Forgetting trade-off for every model and both \texttt{TOFU} splits. Higher is better for both metrics. The best method is highlighted.}
    \label{fig:pareto_grid}
\end{figure}
\begin{figure}[t]
  \centering
  \includegraphics[width=\textwidth]{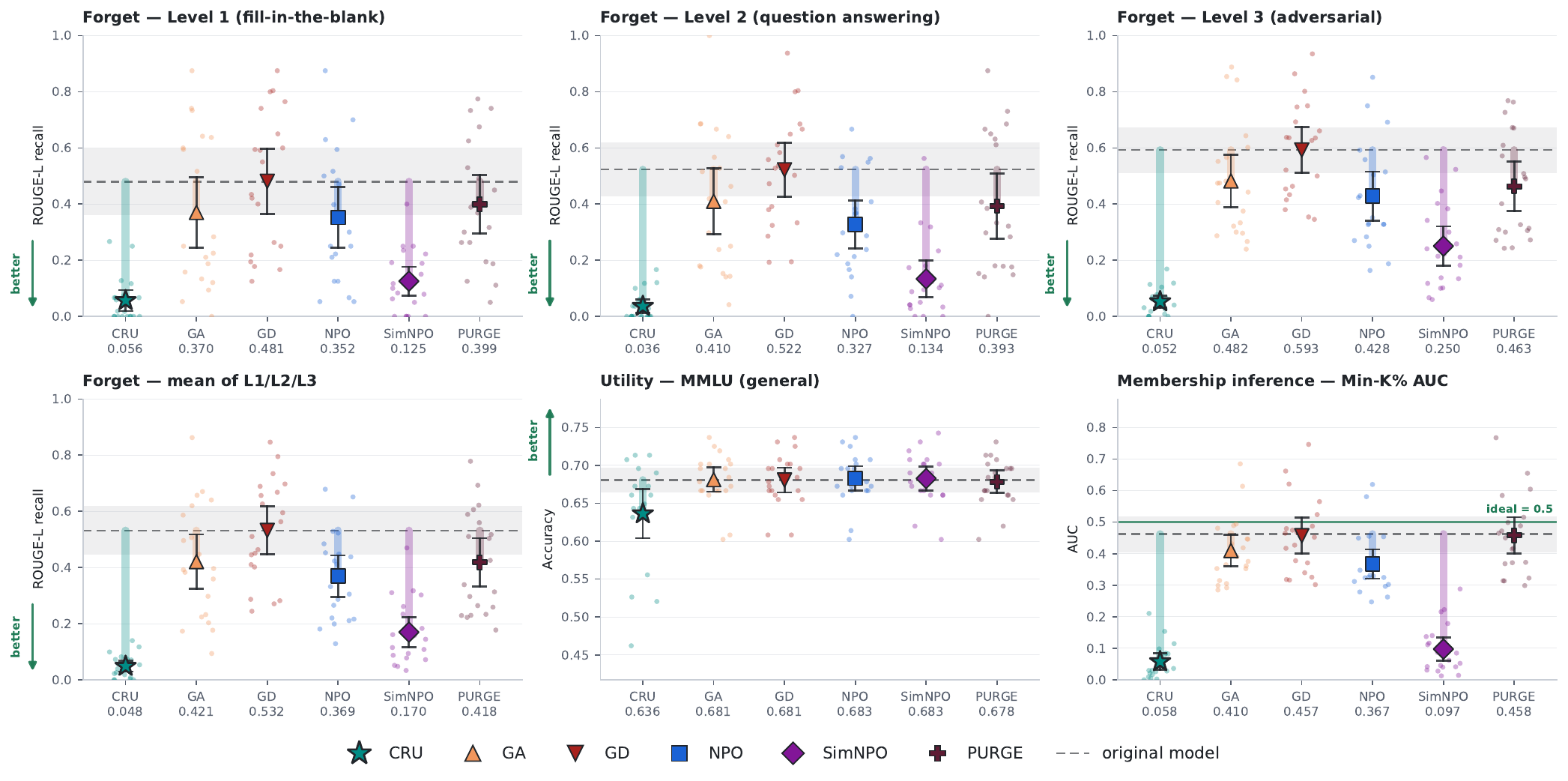}
  \caption{\textbf{\our\ on RWKU against the baselines with Phi-3-mini, 20 targets.} Each marker is the mean over targets, and the whiskers are the 95\% confidence interval; the dots behind are the individual targets, and the dashed line is the original model measured on the targets.}
  \label{fig:rwku-main}
\end{figure}

\begin{figure}[t]
  \centering
  \begin{subfigure}[c]{0.4\textwidth}
    \centering
    \includegraphics[width=\linewidth]{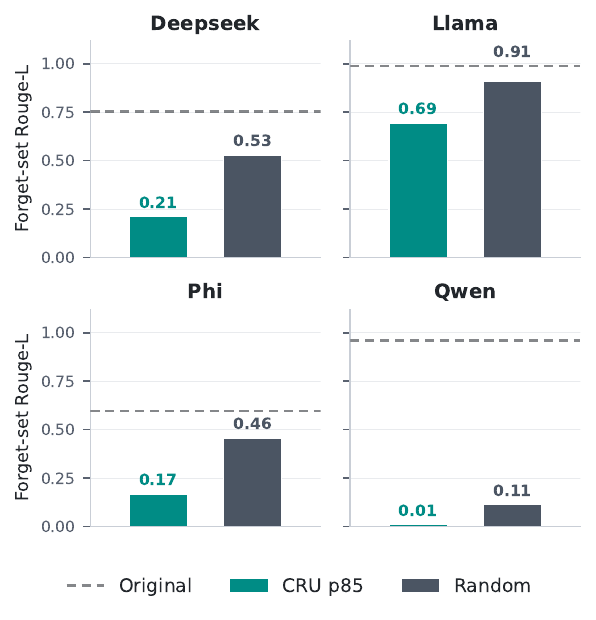}
    \caption{Rouge-L (lower is better) for random neuron selection on \texttt{TOFU} \texttt{forget01}.}
    \label{fig:random_ablation}
  \end{subfigure}
  \hfill
  \begin{subfigure}[c]{0.55\textwidth}
    \centering
    \includegraphics[width=\linewidth]{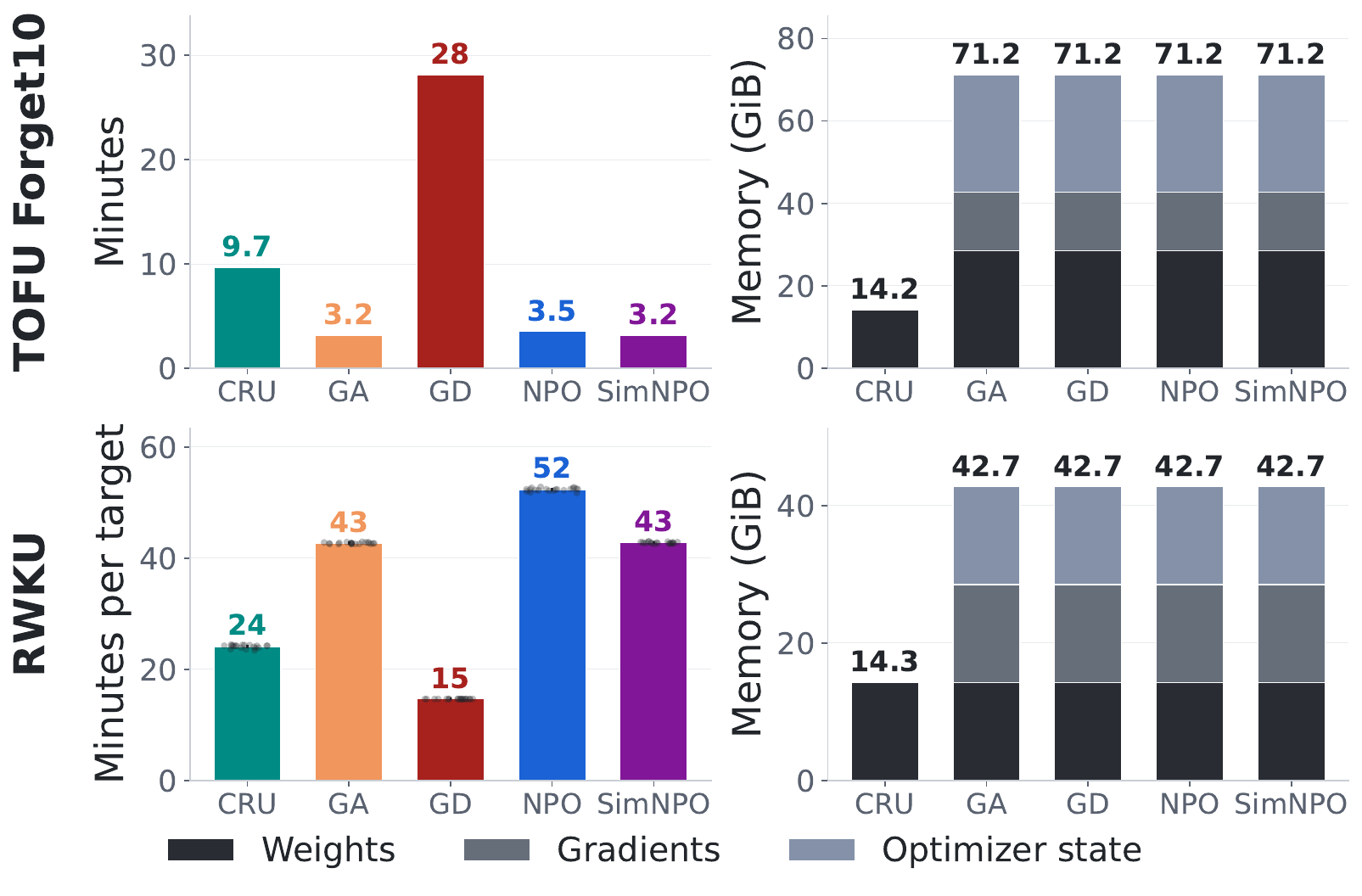}
    \caption{Resources computed on Phi with fp32 tensors.}
    \label{fig:phi_cost}
  \end{subfigure}
  \caption{Random neuron selection (left), and runtime and peak memory (right).}
  \label{fig:lunar-cost}
\end{figure}

\rqbox{\our's memory footprint is lower than all baselines. Still, \our\ has comparable runtimes to SoTA and is orders of magnitude faster than retraining.\footnotemark}
\footnotetext{According to several reports on the training of 7B+ parameter LLMs, such as the technical report of DeepSeek-V3~\citep{deepseekai2025deepseekv3technicalreport}, which lists a training time of 54 days.}

\textbf{\our\ is cheap.}
By eliminating gradients and optimizer states for the base model, \our\ significantly reduces memory overhead compared to baselines. \Cref{fig:phi_cost} reports fp32 runtime and peak memory on Phi (excluding prompt-only ICU and PURGE, evaluated via published checkpoints). \our\ demonstrates a significant memory advantage, using $14.2$ GiB on \texttt{TOFU} \texttt{forget10} and $14.3$ GiB on \texttt{RWKU}, versus $71.2$ GiB and $42.7$ GiB for all SoTA. \our\ remains competitive in runtime by requiring 24 minutes per target ($\sim$0.5 minutes for localization, the rest for routing) on \texttt{RWKU} compared to 52 minutes for NPO and 43 minutes for GA/SimNPO. On \texttt{TOFU}, CRU takes 9.7 minutes, trailing preference-based methods (3.2--3.5 minutes). We show full runtimes in \Cref{fig:full-runtime}.

\subsubsection{Ablations}
\label{sec:ablations}

\rqbox{Both phases of CRU are essential. Variance selection of Phase 1 accurately isolates concept-specific neurons. Phase 2 effectively discriminates between forget and retain sets.}

\begin{table}[h]
\centering
\begin{minipage}[c]{0.45\linewidth}
    \centering
     \includegraphics[width=\linewidth]{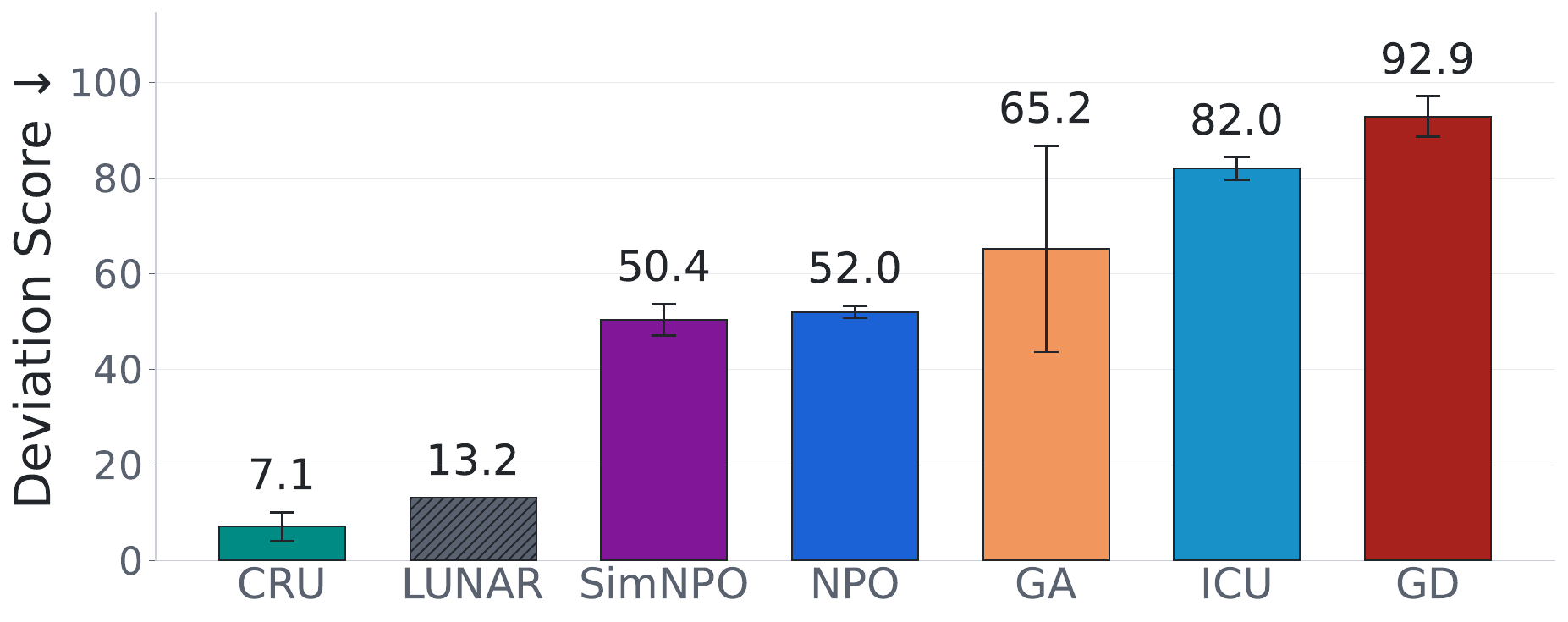}
    \captionof{figure}{ROUGE Deviation Score (lower is better) on Qwen 2.5-7B, on the \texttt{TOFU} split defined by LUNAR.}
    \label{fig:lunar_comparison}
\end{minipage}%
\hfill%
\begin{minipage}[c]{0.53\linewidth}
    \centering
    \caption{\our\ variants over $p$ on \texttt{TOFU} \texttt{forget01} (extrapolated from \Cref{tab:tofu_99_cru_ablations}). Forget is measured as $(1-\text{Rouge-L})$.}
    \label{tab:tofu_cru_ablations_summary}
    \vspace{2pt}
    \resizebox{\linewidth}{!}{%
    \begin{tabular}{lcccccccc}
    \toprule
     & \multicolumn{2}{c}{Qwen} & \multicolumn{2}{c}{Llama} & \multicolumn{2}{c}{Phi} & \multicolumn{2}{c}{DeepSeek} \\
    \cmidrule(lr){2-3} \cmidrule(lr){4-5} \cmidrule(lr){6-7} \cmidrule(lr){8-9}
    Setting & Forget$\uparrow$ & MU$\uparrow$ & Forget$\uparrow$ & MU$\uparrow$ & Forget$\uparrow$ & MU$\uparrow$ & Forget$\uparrow$ & MU$\uparrow$ \\
    \midrule
    Original & 0.04 & 0.48 & 0.01 & 0.52 & 0.07 & 0.41 & 0.25 & 0.42 \\ \midrule
    $p=85$ & \textbf{0.99} & \textbf{0.55} & 0.31 & 0.53 & 0.58 & 0.30 & \textbf{0.79} & 0.35 \\
    $p=90$ & 0.98 & 0.53 & 0.25 & 0.53 & 0.42 & 0.36 & 0.68 & \textbf{0.36} \\
    $p=95$ & 0.98 & 0.50 & \textbf{0.67} & 0.55 & 0.39 & \textbf{0.38} & 0.57 & 0.35 \\
    $p=99$ & 0.97 & 0.45 & 0.54 & \textbf{0.56} & \textbf{0.72} & 0.08 & 0.68 & 0.34 \\
    \bottomrule
    \end{tabular}%
    }
\end{minipage}
\end{table}
\textbf{The neurons we gate are concept-specific, not generically noisy.}
Because variance is partly an inherent property of individual neurons, Phase~1 might face the criticism that it merely selects units that are generically noisy across all inputs, rather than those tied to the target concept. To test this, we run Phase~1 twice on frozen weights -- i.e., once on $40$ forget-set examples ($\mathcal{C}^f_k$) and once on $40$ retain-set examples ($\mathcal{C}^r_k$) -- and compute their overlap coefficient $|\mathcal{C}^f_k \cap \mathcal{C}^r_k| / |\mathcal{C}^f_k|$ (\Cref{fig:concept-overlap}). Across gated blocks, the selected sets share only a minority of neurons: $27\%$ on Llama, $38\%$ on DeepSeek, $46\%$ on Phi, and $59\%$ on Qwen, with un-thresholded variance rankings showing similarly low-to-moderate Spearman correlations ($\rho = 0.18, 0.37, 0.44, 0.46$). Thus, the majority of gated neurons are specifically driven by the forget set. The remaining generic minority reflects baseline neuron variance, which likely accounts for the minor utility trade-offs observed on \texttt{RWKU}.

\textbf{Random neuron selection is not effective (and, rightly so).} We choose a random number of neurons in the same layers and of the same number that \our\ would have selected through variance (\Cref{fig:random_ablation}).  We compute the Forget set Rouge-L (lower is better) on \texttt{TOFU} \texttt{forget10}. Results are unambiguous: i.e., random selection with the same gating always performs worse, very close to the original model, as if no unlearning was done, across all models.

\begin{figure}[t]
  \centering
  \includegraphics[width=\textwidth]{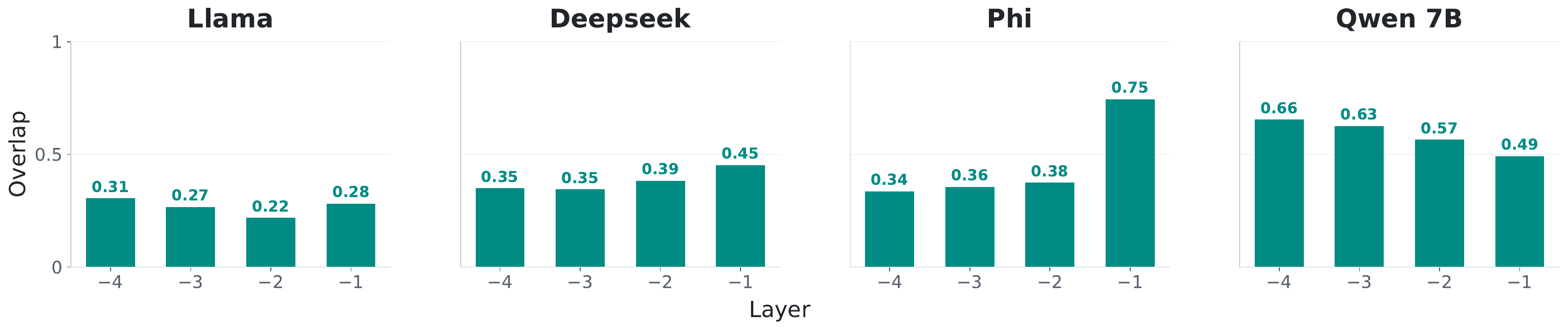}
  \caption{\textbf{Neuron overlap for the forget and retain sets.} Each bar is the fraction of the neurons in $C_k^{f}$ that are also in $C_k^{r}$, $|C_k^{f} \cap C_k^{r}| / |C_k^{f}|$: how many of the gated neurons would have been picked without ever showing the model the forget set. Compued on \texttt{TOFU Forget01} with $p=95$.}
  \label{fig:concept-overlap}
\end{figure}

\textbf{Gating fewer neurons does not produce a gentler intervention.} 
Intuitively, adjusting the percentile threshold $p$ to restrict $|\mathcal{C}_l|$ should yield a monotonic trade-off between forgetting and utility. However, on \texttt{RWKU}, $p=90$ achieves the best performance on both axes (forget $0.055$, utility $0.418$), whereas $p=99$, which gates only a fifth as many neurons, degrades both metrics (forget $0.101$, utility $0.354$). This occurs because the router compensates for the restricted set of neurons by closing the remaining gates more aggressively. The same non-monotonic pattern holds on \texttt{TOFU}, where Phi's utility drops to $0.08$ at $p=99$ compared to $0.38$ at $p=95$ (\Cref{tab:tofu_cru_ablations_summary}; see also \Cref{fig:cru_percentile_ablation,tab:tofu_99_cru_ablations}). In short, restricting where suppression can land concentrates the intervention.

\section{Conclusion}\label{sec:conclusion}

\our\ is a streamlined and lightweight unlearning method that applies seamlessly to \emph{any} LLM. One untrained forward pass over the forget set ranks neurons by how much their pooled activations vary, and small routing modules, then gate those neurons per token, closing on forget inputs and passing on retain ones. The base model is frozen throughout, so every change in behavior is attributable to the gated neurons rather than to parameter drift.

Our analysis shows three advantages. First, high forgetting scores are not evidence of unlearning: on \texttt{TOFU} \texttt{forget10}, GA, NPO, and SimNPO report up to $0.97$ forgetting at null utility, which is a broken model reported as a success, and any comparison that does not read the two axes together will reward that. Second, suppressing a surface form is not the same as removing what produces it. Every baseline we tested loses ground as the probe gets harder, while \our\ holds at $0.052$ adversarial recall, compared to $0.250$ for the next-best, which reflects the difference between knowledge that is gone and knowledge that is merely inconvenient to reach. Third, the cost of unlearning is mostly the cost of retraining: dropping gradients and optimizer states for the base model reduces the footprint from 71 GiB to 14 GiB at comparable runtime, putting per-concept unlearning within reach of a single consumer device.

\textbf{Limitations.} \our\ leaves a few things open. Phase 1 uses variance as a cheap stand-in for attribution, and only a minority of the neurons it selects are generically variable rather than concept-specific. The gates withhold some activation on retain inputs, which accounts for the small utility drop we observe on \texttt{RWKU}. Our choice of the last four blocks follows the localization literature rather than our own criterion. Because each request trains its own routers on a frozen base, sequential and simultaneous unlearning of multiple concepts is a direction we leave for future work. We also leave the ablation of \Cref{eq:objective} to support the choice of the two terms as a future avenue  (\S\ref{app:loss-shape}).

\cta{We hope the broader point outlives our method. Most unlearning research has asked what to optimize and answered by changing the weights. We suggest that concepts can be reached via the neurons that express them, and that suppression can be determined per query rather than baked into the parameters. We encourage future work in unlearning to treat the scope of the intervention as a design choice in its own right.}

 
\subsection*{AI use statement}

In this work, we used generative AI tools to formalize mathematical claims and to aid in implementing methods. 

We have \textbf{not} used generative AI tools to assist with translation, develop theoretical models, provide critical ingredients for proving mathematical claims, provide feedback on research methodology, support qualitative and thematic data analysis, or interpret results. Finally, generating synthetic datasets, assisting in writing proofs, proposing hypotheses, and cleaning datasets are not applicable to this work. 

Additionally, we used generative AI tools to create scientific figures and edit software code. We have reviewed all AI-assisted work: all co-authors have checked and revisited the text, including the mathematical formulations; LLM-generated code was verified and tested for correctness by 2 authors; and all figures were reviewed. 

We take responsibility for the final content of this work, including text, claims, or artifacts produced with the aid of generative AI.

\subsection*{Reproducibility statement}

We made every effort to ensure the results were reproducible. Our repository is publicly available at \url{https://anonymous.4open.science/r/cru-iclr-4E56/README.md} and includes a detailed README with instructions for reproducing the experiments.






\bibliography{ref}
\bibliographystyle{iclr2027_conference}

\appendix

\section{What SoTA does with Equations~\eqref{eq:unlearning-objective} and~\eqref{eq:scope}}
\label{app:sota}

\subsection{The objective}
\label{app:objective}

Recall~\Cref{eq:unlearning-objective}. GA sets $\lossf = -\ell$ with $\lambda =
0$~\citep{yao2024large}; GD keeps $\lossf = -\ell$ and
takes $\lambda > 0$~\citep{liu2022continual}. Preference-based objectives replace $-\ell$, which is unbounded below, with a saturating alternative. Writing $\sigma$ for the logistic function and $y_{\mathrm{idk}}$ for a refusal response, the DPO formulation prefers refusal over the true
completion~\citep{rafailov2023direct},
\begin{equation}\label{eq:dpo}
    \lossf^{\mathrm{DPO}}(y \mid x; \theta)
    = -\log \sigma\!\left(
        \beta \log \frac{\pth(y_{\mathrm{idk}} \mid x)}{\porig(y_{\mathrm{idk}} \mid x)}
        - \beta \log \frac{\pth(y \mid x)}{\porig(y \mid x)}
      \right),
\end{equation}
whereas NPO retains only the dispreferred branch~\citep{npo},
\begin{equation}\label{eq:npo}
    \lossf^{\mathrm{NPO}}(y \mid x; \theta)
    = -\frac{2}{\beta} \log \sigma\!\left(
        -\beta \log \frac{\pth(y \mid x)}{\porig(y \mid x)} \right)
    = \frac{2}{\beta}\,
      \log\!\left(1 + \left[\frac{\pth(y \mid x)}{\porig(y \mid x)}\right]^{\beta}\right),
\end{equation}
which recovers GA ascent as $\beta \to 0$ but, being bounded below by zero, diverges exponentially more slowly. SimNPO discards the reference model and normalizes by response length~\citep{simnpo},
\begin{equation}\label{eq:simnpo}
    \lossf^{\mathrm{SimNPO}}(y \mid x; \theta)
    = -\frac{2}{\beta} \log \sigma\!\left(
        -\frac{\beta}{|y|} \log \pth(y \mid x) \right).
\end{equation}
Throughout, $\beta > 0$ is the inverse temperature of \Cref{eq:dpo,eq:npo,eq:simnpo} and $\lambda$ the forget/retain weight of \Cref{eq:unlearning-objective}.
 
The second template replaces likelihood targets with a scalar reward on sampled completions. With a reward $r(x,y)$ that penalizes any mention of the forbidden concept, unlearning becomes a KL-constrained policy-optimization problem~\citep{zaradoukas2026reinforcement},
\begin{equation}\label{eq:rl}
    \max_{\theta \in \Theta} \;\;
    \mathbb{E}_{x \sim \Df} \, \mathbb{E}_{y \sim \pth(\cdot \mid x)}
        \!\left[ r(x,y) \right]
    \;-\; \lambda_{\mathrm{KL}} \,
    \mathbb{E}_{x \sim \Df} \!\left[
        \mathrm{KL}\!\left( \pth(\cdot \mid x) \,\|\, \porig(\cdot \mid x) \right)
    \right],
\end{equation}
where the KL term plays the utility-preserving role that
$\mathbb{E}_{\Dr}[\lossr]$ plays in \Cref{eq:unlearning-objective}, and advantages are estimated group-relatively over completions sampled per prompt rather than by a learned critic.
 
The third template does not optimize at all. Prompt-based methods leave the weights fixed and act on the context: for a context $c$,
\begin{equation}\label{eq:icu}
    \theta_{\mathrm{u}} = \theta_{\mathrm{o}},
    \qquad
    \punl(y \mid x) \;=\; \porig(y \mid c \oplus x),
\end{equation}
with $\oplus$ denoting concatenation~\citep{pawelczyk2023context}. The consequence is visible in the notation: since $\theta_{\mathrm{u}} = \theta_{\mathrm{o}}$, forgetting is a property of the session rather than of the model, and any measurement taken with $c$ removed returns $\porig$ exactly.
 
\subsection{The scope, and where CRU sits}
\label{app:scope}

\paragraph{What ${\mathbf{w}}$ is allowed to hold.} ~\Cref{eq:scope} writes a parameter-space unlearning operator as $\theta_u = \theta_o + \mathbf{w} \odot \Delta$, and its two factors divide the labor: $\Delta$ is where the objective of \S\ref{app:objective} ends up, and $\mathbf{\mathbf{w}}$ is everything a method has to say about scope. Read this way, the literature settles $\mathbf{\mathbf{w}}$ in one of two ways. Global methods (GA, GD, NPO, SimNPO, PURGE) fix $\mathbf{\mathbf{w}} = \mathbf{1}$ and leave scope to the objective. Localized ones (SalUn, WAGLE, CLUE, LUNAR) choose $\mathbf{\mathbf{w}} \neq \mathbf{1}$ based on saliency, influence attribution, circuit discovery, or layer selection. In both cases, $\mathbf{\mathbf{w}}$ is a constant vector, computed once while unlearning and applied unchanged to every query that follows.~\Cref{eq:scope} has no slot in which to write a quantity that depends on the input, so a method that wants one cannot be written in it at all.

\paragraph{\our\ leaves the equation empty.} \our\ sets $\mathbf{w} \odot \Delta = \mathbf{0}$, so that $\theta_u = \theta_o$ exactly, and every weight of the base model survives unlearning bit for bit. What replaces $\mathbf{w}$ is a map from a hidden state to a full-width multiplier. For block $l$, with concept set $\mathcal{C}_l = \{j_1 < \cdots < j_{|\mathcal{C}_l|}\}$ and router parameters $\phi_l = (\mathbf{\mathbf{w}}_1, \mathbf{b}_1, \mathbf{\mathbf{w}}_2, \mathbf{b}_2)$, define $U_l : \mathbb{R}^d \to [0,1]^d$ by
\begin{equation}
    U_l(\mathbf{h})[j] \;=\;
    \begin{cases}
        R_l(\mathbf{h})[m], & j = j_m \in \mathcal{C}_l, \\[2pt]
        1, & j \notin \mathcal{C}_l,
    \end{cases}
    \qquad
    R_l(\mathbf{h}) = \sigma\!\left(\mathbf{\mathbf{w}}_2 \operatorname{ReLU}(\mathbf{\mathbf{w}}_1 \mathbf{h}
    + \mathbf{b}_1) + \mathbf{b}_2\right),
    \label{eq:gate-map}
\end{equation}
so that~\Cref{eq:gating} is the product
\begin{equation}
    \mathbf{h}'_{l,t} \;=\; U_l(\mathbf{h}_{l,t}) \odot \mathbf{h}_{l,t}.
    \label{eq:scope-cru}
\end{equation}
$U_l$ is determined by two things of different kinds. $\mathcal{C}_l$ is a constant of the map, fixed once per concept in Phase~1 and never differentiated; $\phi_l$ is trained. Coordinates outside $\mathcal{C}_l$ are pinned to $1$ by construction, which is what makes the set a hard restriction on where suppression may land. The map carries no dependence on $t$: the same $U_l$ is applied at every position.

\Cref{eq:scope-cru} is~\Cref{eq:scope} with three substitutions. The operand is an activation rather than a parameter vector; the product runs over model width at a single position rather than over $\dim\Theta$; and the multiplier is the image of a function rather than a stored vector. \textbf{Only the third is the claim of this paper.} The first two say where we act, and one could imagine acting there with a constant; the third says that what we do once we are there is not fixed until the query arrives.

\paragraph{A fixed selector and a learned one.} \our\ does not abolish the constant selector of~\Cref{eq:scope} so much as demote it. The set $\mathcal{C}_l$ is computed once per concept and is as fixed thereafter as any $\mathbf{w}$; what changes is its authority. It decides where suppression is permitted to land, never how much of it lands, and a neuron inside $\mathcal{C}_l$ is only a candidate. The magnitude is settled per token by $R_l$, which is why the same neuron can be closed at one position and passed at the next (\S\ref{sec:phase2}).

\paragraph{Special cases.} Because $U_l$ is a function, the frozen model is recovered by restricting the class it is drawn from, and the restrictions form an order.
\begin{itemize}[leftmargin=1.5em, itemsep=1pt, topsep=2pt]
    \item $U_l \equiv \mathbf{1}_d$ returns the base model.
    \item $U_l$ constant in $\mathbf{h}$, with entries in $\{0,1\}$, is a fixed hard mask over a neuron subset, which is basically NeuMuter.
    \item $U_l$ piecewise constant, taking $\mathbf{1}_d$ or a tuned magnitude $\alpha$ according to whether a similarity $s(\mathbf{h})$ clears a grid-searched threshold $\tau$, gives CAST and DSG.
    \item $U_l$ smooth and learned, as in ~\Cref{eq:gate-map} (i.e.,~\Cref{eq:gating}), is \our.
\end{itemize}
The ordering is by how much of $U_l$ is decided before the model is ever queried. The rule-based methods fix the magnitude and condition the trigger; ~\Cref{eq:gating,eq:gate-map} learns both.

\paragraph{What this costs and returns.} Since $\Delta$ is never formed, the unlearning operator is $\mathcal{U}(\pi_o, \mathcal{D}_f, \mathcal{D}_r) = (\theta_o, \phi)$. No gradient or optimizer state is ever allocated for $\theta_o$, which is the whole of the memory result in \S\ref{sec:results} rather than an implementation detail. The operator also has an inverse, whose methods that write into $\theta$ do not. In other words, deleting $\phi$ returns $\pi_o$, not an approximation of it.

\paragraph{Why this is not ICU.} Prompt-based unlearning also leaves $\theta_u = \theta_o$. Nevertheless, per~\Cref{eq:icu}, the intervention lives in the context $c$, so forgetting is a property of the session. In~\Cref{eq:scope-cru}, the intervention lives in $\phi$, which travels with the weights and not the input.

\section{On the shape of the two loss terms}
\label{app:loss-shape}

\Cref{eq:objective} pairs a linear forget term with a quadratic retain term. The asymmetry is not forced, so this section gives the reasons for it: what each shape does to the gradient reaching the router (\S\ref{app:shape-grad}), and what the four combinations predict for a router that cannot separate the two corpora (\S\ref{app:shape-collapse}).

\subsection{Gradients}
\label{app:shape-grad}

Write $z_n$ for the pre-activation of the router's output entry that produces gate $w_n =\sigma(z_n)$, and let $\bar{w}_l = \tfrac{1}{N}\sum_{n=1}^N w_n$ be the block mean over the $N$ selected-neuron, non-padding entries of a batch. Every shape we consider reaches $z_n$ through the same Jacobian,
\begin{equation}
    \frac{\partial w_n}{\partial z_n} \;=\; w_n (1 - w_n),
    \label{eq:app-sigmoid-jac}
\end{equation}
so the shapes differ only in the scalar multiplying it. For the forget term,
\begin{equation}
    \frac{\partial\, \bar{w}_l}{\partial z_n} = \frac{1}{N}\, w_n(1-w_n),
    \qquad
    \frac{\partial\, \bar{w}_l^{\,2}}{\partial z_n} = 2\bar{w}_l \cdot \frac{1}{N}\, w_n(1-w_n),
    \label{eq:app-forget-grad}
\end{equation}
and for the retain term, with target $1$,
\begin{equation}
    \frac{\partial\, (1 - \bar{w}_l)}{\partial z_n} = -\frac{1}{N}\, w_n(1-w_n),
    \qquad
    \frac{\partial\, (\bar{w}_l - 1)^2}{\partial z_n}
        = 2(\bar{w}_l - 1) \cdot \frac{1}{N}\, w_n(1-w_n).
    \label{eq:app-retain-grad}
\end{equation}

\paragraph{The retain term should vanish at its target; the forget term should not.} The two terms are not two instances of the same kind of goal. $\mathcal{L}_{\text{suppress}}$ has no target. In other words, a closed gate is better than a half-closed one, and there is no value of $\bar{w}^{\,f}_l$ at which the method would prefer to stop. $\mathcal{L}_{\text{pass}}$ has one, i.e., $\bar{w}^{\,r}_l = 1$, at which the router is passing retain activations unchanged. A penalty on a constraint should therefore release its gradient at the constraint, which \Cref{eq:app-retain-grad} does in the quadratic case and does not in the linear one: a linear retain term pushes toward $1$ with undiminished force at $\bar{w}^{\,r}_l = 0.999$, so the equilibrium between the two terms would be set by their relative weight alone rather than by how far the retain gates have actually drifted. The quadratic makes retain pressure a function of the violation, which is the standard reason for a squared penalty and the reason here.

\paragraph{Why the forget term is not squared.} The same reasoning does not transfer to $\mathcal{L}_{\text{suppress}}$, because the router already saturates on its own. The factor $w_n(1-w_n)$ of \Cref{eq:app-sigmoid-jac} vanishes as a gate shuts, so the gradient reaching a nearly closed gate is small before any choice of loss shape. Squaring the forget term rescales it by a further $2\bar{w}^{\,f}_l$, a second factor that vanishes with the first: at the block means we observe on \texttt{TOFU} \texttt{forget10} the rescaling is $2\bar{w}^{\,f}_l \in [0.02, 0.12]$ (\Cref{fig:gate_gap}), between roughly $8\times$ and $50\times$ smaller than the linear term depending on the model, and shrinking further as the gate approaches zero. The linear term holds the multiplier at $1$ and leaves only the sigmoid's own saturation. Linearity is affordable here in a way it is not for gradient ascent (\S\ref{app:objective}): since $\bar{w}^{\,f}_l \in [0,1]$, $\mathcal{L}_{\text{suppress}}$ is bounded below by construction.

\paragraph{Consequences for what the gates settle at.} Because $\mathcal{L}_{\text{pass}}$ relaxes near $1$ while $\mathcal{L}_{\text{suppress}}$ does not relax near $0$, and the two act on the same router parameters over different batches, the retain gates come to rest short of their target: \Cref{fig:gate_gap} reports $0.79$--$0.85$ rather than $1$. This is the intended trade and not a training failure. The withheld activation is bounded by construction, since the base weights are untouched and the gate lies in $[0,1]$, so its cost appears as the small utility drop in \texttt{RWKU}, rather than as parameter drift. A linear retain term would hold the gates closer to $1$, at the cost of making that pressure insensitive to how much was withheld.

\paragraph{A note on one-sidedness.} Since $w_n \in [0,1]$ we have $\bar{w}^{\,r}_l \le 1$ throughout, so $(\bar{w}^{\,r}_l - 1)^2 = (1 - \bar{w}^{\,r}_l)^2$ and the square never penalises overshoot. It is a one-sided penalty in effect, and the squared form is chosen for the vanishing gradient at the target rather than for two-sidedness.

\subsection{What each combination predicts for a collapsed router}
\label{app:shape-collapse}

\Cref{prop:tied} treats the hypothesis that the router cannot distinguish the two corpora, so that a single gate value $w$ is returned on both. The prediction it makes depends on the shapes, and not every pairing makes one.

\begin{proposition}
\label{prop:shape-collapse}
Let $\mathcal{L}(w) = f(w) + \lambda\, r(w)$ be the collapsed loss under a constant gate $w$, with $\lambda > 0$. Then:
\begin{enumerate}[leftmargin=1.5em, itemsep=1pt, topsep=2pt]
    \item $f(w) = w$, $r(w) = (w-1)^2$: $\;\mathcal{L}$ is strictly convex and minimized at $w^\star = 1 - \tfrac{1}{2\lambda}$, which lies in $(0,1)$ iff $\lambda > \tfrac{1}{2}$.
    \item $f(w) = w^2$, $r(w) = (w-1)^2$: $\;w^\star = \tfrac{\lambda}{1+\lambda} \in (0,1)$ for every $\lambda > 0$.
    \item $f(w) = w$, $r(w) = 1-w$: $\;\mathcal{L}(w) = (1-\lambda)w + \lambda$ is affine, so the minimum over $[0,1]$ is attained at an endpoint according to the sign of $1-\lambda$, and every $w$ is optimal at $\lambda = 1$.
    \item $f(w) = w^2$, $r(w) = 1-w$: $\;w^\star = \tfrac{\lambda}{2}$, which lies in $(0,1)$ iff $\lambda < 2$.
\end{enumerate}
\end{proposition}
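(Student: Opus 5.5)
The plan is to handle the four cases separately, since each $\mathcal{L}$ is at most quadratic in $w$ and elementary calculus settles it. In every case we compute $\mathcal{L}'$ and $\mathcal{L}''$, locate the stationary point when $\mathcal{L}$ is strictly convex, and otherwise argue directly from the affine form. Case~1 is exactly \Cref{prop:tied}, which we invoke. There $\mathcal{L}''=2\lambda>0$ gives strict convexity and $w^\star = 1-\tfrac{1}{2\lambda}$. We then check the interval condition: $w^\star<1$ holds automatically because $\lambda>0$, and $w^\star>0$ is equivalent to $2\lambda>1$, i.e.\ $\lambda>\tfrac12$.

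For Case~2 we expand to get $\mathcal{L}(w)=w^2+\lambda(w-1)^2$. This gives $\mathcal{L}'(w)=2w+2\lambda(w-1)$ and $\mathcal{L}''=2(1+\lambda)>0$. Setting $\mathcal{L}'=0$ yields $w^\star=\lambda/(1+\lambda)$. Since $0<\lambda<1+\lambda$ for every $\lambda>0$, this lies in $(0,1)$, so membership needs no condition on $\lambda$.

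For Case~3 we simplify to $\mathcal{L}(w)=w+\lambda(1-w)=(1-\lambda)w+\lambda$, which is affine. On the compact interval $[0,1]$ the minimum of an affine function is attained at an endpoint. If $\lambda<1$ the slope is positive and the minimizer is $w=0$. If $\lambda>1$ the slope is negative and the minimizer is $w=1$. At $\lambda=1$ the function is constantly $1$, so every $w$ is optimal.

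For Case~4 we take $\mathcal{L}(w)=w^2+\lambda(1-w)$. Then $\mathcal{L}'=2w-\lambda$ and $\mathcal{L}''=2>0$, so $w^\star=\lambda/2$. This is positive for all $\lambda>0$, and it is less than $1$ iff $\lambda<2$.

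The only real obstacle is interpretive rather than computational. The statement mixes minimization over $\mathbb{R}$ (Cases 1, 2 and 4, following \Cref{prop:tied}) with minimization over $[0,1]$ (Case~3). We will make this explicit. In the strictly convex cases we minimize over $\mathbb{R}$ and then report when the unconstrained minimizer falls in $(0,1)$. We will also note in passing that when it falls outside, the constrained minimizer on $[0,1]$ is the projected endpoint, by convexity. In the affine case an unconstrained minimizer does not exist for $\lambda\neq1$, which is why the bounded domain is needed there.
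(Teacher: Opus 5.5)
Your proposal is correct and follows the same route as the paper's proof: treat each $\mathcal{L}$ as an affine or quadratic polynomial, solve $\mathcal{L}'(w)=0$, check the sign of $\mathcal{L}''$, and resolve the affine Case~3 at an endpoint of $[0,1]$. Your explicit interval checks and your remark separating minimization over $\mathbb{R}$ from minimization over $[0,1]$ spell out what the paper's one-line proof leaves implicit.
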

\begin{proof}
Each $\mathcal{L}$ is an affine or quadratic polynomial in $w$; set $\mathcal{L}'(w) = 0$ and check the sign of $\mathcal{L}''$. In case 3, $\mathcal{L}'' = 0$ and the minimum is attained at an endpoint of $[0,1]$.
\end{proof}

Case 3 is the reason we would not use a linear retain term, even setting \S\ref{app:shape-grad} aside. With both terms linear, the collapsed loss is affine, its optimum sits at a boundary of the box, and at $\lambda = 1$ the loss is constant in $w$: the null model predicts nothing, and a measurement of the gate means cannot be compared against it. Case~1, the pairing we use, gives the null an interior value ($0.5$ at $\lambda = 1$) that our measurements contradict by more than an order of magnitude on the forget side. Case~2 also gives an interior value, but at the cost of the gradient rescaling of \Cref{eq:app-forget-grad}. Case~4 makes the location of the null grow linearly in $\lambda$ and leaves $[0,1]$ at $\lambda = 2$, so for the retained weights, we use it to predict a boundary again.

\paragraph{What we have and have not shown.} The argument above is analytical. It says what each pairing does to the gradient reaching the router and what each predicts for a collapsed router, and on both counts the linear-quadratic pairing is the one we can reason about: the forget term avoids compounding the sigmoid's own saturation, the retain term releases its gradient at the constraint it encodes, and the null model of \Cref{prop:tied} lands at an interior value our measurements contradict. We have not run the corresponding ablation, so we do not claim that the alternatives fail empirically, nor that \our's results depend on this choice. The gate statistics we report \Cref{fig:gate_gap}) are consistent with the mechanism described here, but do not discriminate among the four shapes; a sweep over them is the natural next check.
\section{Detailed Experimental Settings}\label{apx:full_exp_settings}

\paragraph{Hardware.} We carry out all our experiments on a single NVIDIA GB10 Grace Blackwell system equipped with a 20-core Arm CPU and 128 GB of unified memory shared between the CPU and the integrated Blackwell GPU.

\paragraph{Models.}  \Cref{tab:models} shows the family, number of parameters, number of layers, and context (in terms of thousands of tokens) of the models we employ in our experimental settings.

\paragraph{Benchmark metrics.} \Cref{tab:benchmark_metrics} shows all the metrics evaluated for each dataset. For \texttt{TOFU}, utility metrics are evaluated for three sets: the retain set, i.e., the training set without the forget set, and two held-out sets called ``Real Authors'' and ``World Facts'' \citep{maini2024tofu}. These are not part of the model's fine-tuning and contain information assumed to be already known.

\begin{table}[ht]
\centering
\small
\setlength{\tabcolsep}{5pt}
\caption{The four model families we evaluate on. Sizes are the total parameter counts of
the released checkpoints.}
\label{tab:models}
\begin{tabular}{@{}llrrr@{}}
\toprule
Short name & Family  & Params & Layers & Context \\
\midrule
DeepSeek & DeepSeek LLM   & 6.9B & 30 & 4K   \\
Qwen     & Qwen2.5                            & 7.6B & 28 & 32K  \\
Llama    & Llama 3.1         & 8.0B & 32 & 128K \\
Phi      & Phi-3        & 3.8B & 32 & 4K   \\
\bottomrule
\end{tabular}
\end{table}

\begin{table}[ht]
\centering
\footnotesize
\setlength{\tabcolsep}{4pt}
\caption{Evaluation metrics of the two benchmarks.}
\label{tab:benchmark_metrics}
\begin{tabular}{@{}p{0.22\textwidth}p{0.72\textwidth}@{}}
\toprule
Metric & What it measures \\
\midrule
\multicolumn{2}{@{}l}{\texttt{TOFU}} \\
\addlinespace[1pt]
Probability & Normalized likelihood $P(a\mid q)^{1/|a|}$ of the ground-truth answer (multiple-choice variant on Real Authors / World Facts). Forget $\downarrow$, utility $\uparrow$. \\
ROUGE-L recall & Overlap between the generated answer and the reference answer. Forget $\downarrow$, utility $\uparrow$. \\
Truth Ratio & Likelihood of perturbed (false) answers over that of a correct paraphrase: how much the model now prefers a wrong answer. Forget $\uparrow$, utility $\downarrow$. \\
Forget Quality $\uparrow$ & KS-test $p$-value against a model retrained on the retain set only: high $p$ = indistinguishable from never having seen the forgotten data. \\
Model Utility $\uparrow$ & Harmonic mean of the nine utility scores (probability, ROUGE-L, rescaled truth ratio $\times$ retain / Real Authors / World Facts). \\
\addlinespace[2pt]
\midrule
\multicolumn{2}{@{}l}{\texttt{RWKU}} \\
\addlinespace[1pt]
Forget FB / QA / AA $\downarrow$ & ROUGE-L recall on cloze, question, and adversarial (jailbreak-style) probes about the target; AA exposes knowledge that is only suppressed at the surface. \\
Neighbor FB / QA $\uparrow$ & The same probes on closely related entities that must be kept: edit locality. \\
MIA (LOSS, Zlib, Min-K\%++) & Membership-inference scores on held-out fragments about the target (FM $\uparrow$) and about unrelated subjects (RM, unchanged); the AUC separating the two should return to chance ($\to 0.5$). \\
Utility $\uparrow$ & MMLU 5-shot accuracy (general), BBH 3-shot CoT exact match (reasoning), TruthfulQA MC1 (truthfulness), TriviaQA F1 (factuality) and AlpacaEval $n$-gram entropy (fluency, i.e.\ non-degenerate text). \\
\bottomrule
\end{tabular}
\end{table}

\paragraph{Comparison with LUNAR.} The authors of LUNAR, in their original paper \citep{lunar}, use a custom split of TOFU that is not part of the original benchmark. Specifically, they use a forget set consisting of a single author; we conformed to this setting by creating an equal custom split on which we evaluate their own metric, \textit{deviation score}. The deviation score is defined as: 
\[
  \mathrm{DS} = 100 \times \sqrt{\mathrm{ROUGE1}_{\mathrm{forget}}^{2}
                                 + \bigl(1 - \mathrm{ROUGE1}_{\mathrm{retain}}\bigr)^{2}}.
\]
That is the Euclidean distance between a method's operating point and the ideal outcome of unlearning, at which the model reproduces none of the forget set ($\mathrm{ROUGE1}_{\mathrm{forget}} = 0$) while remaining unchanged on the retain set ($\mathrm{ROUGE1}_{\mathrm{retain}} = 1$). It is therefore a quantity to be minimized, with $0$ denoting perfect unlearning and a fine-tuned model that has not been unlearned scoring close to $100$. Following LUNAR, we score ROUGE-1 recall rather than the ROUGE-L recall used elsewhere in this work. We report the result on Qwen2.5-7B, the only model common to both papers.

\paragraph{Comparison with PURGE.} For \texttt{RWKU}, we adapt our settings to the ones of PURGE \citep{zaradoukas2026reinforcement} for a faithful comparison: the forget set is composed of one author at a time, and we report metrics that are the mean of 20 authors. Moreover, we use the publicly released model checkpoints from the original paper; their download is integrated into our evaluation code.

\subsection{Hyperparameters}\label{apx:hyperparameters}

\begin{figure}
    \centering
    \includegraphics[width=1\linewidth]{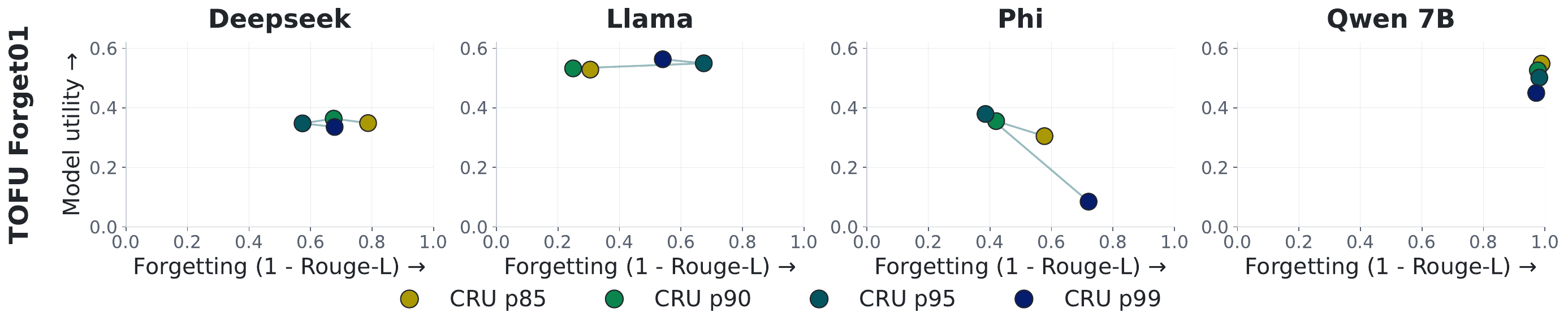}
    \caption{Hyperparameter $p$ study for \our\ on \texttt{TOFU}.}
    \label{fig:cru_percentile_ablation}
\end{figure}

\begin{table}[t]
\centering
\small
\caption{CRU percentile ablation on \texttt{TOFU} \texttt{forget01}. ``Original'' is the model before unlearning. Best method per column, within each model, in bold.}
\label{tab:tofu_99_cru_ablations}
\resizebox{\textwidth}{!}{%
\begin{tabular}{llccccccccccccc}
\toprule
 &  & \multicolumn{3}{c}{Unlearning Efficacy (Forget Set)} & \multicolumn{10}{c}{Utility Preservation} \\
\cmidrule(lr){3-5}\cmidrule(lr){6-15}
 &  & \multicolumn{3}{c}{} & \multicolumn{3}{c}{Real Authors} & \multicolumn{3}{c}{World Facts} & \multicolumn{3}{c}{Retain Set} &  \\
\cmidrule(lr){6-8}\cmidrule(lr){9-11}\cmidrule(lr){12-14}
Model & Method & (1-Rouge-L)$\uparrow$ & (1-Prob.)$\uparrow$ & Truth ratio$\downarrow$ & Rouge-L$\uparrow$ & Prob.$\uparrow$ & Truth ratio$\uparrow$ & Rouge-L$\uparrow$ & Prob.$\uparrow$ & Truth ratio$\uparrow$ & Rouge-L$\uparrow$ & Prob.$\uparrow$ & Truth ratio$\uparrow$ & MU$\uparrow$ \\
\midrule
\multirow{5}{*}{Qwen} & Original & 0.04 & 0.01 & -- & 0.83 & 0.31 & 0.32 & 0.94 & 0.37 & 0.29 & 0.98 & 0.99 & 0.59 & 0.48 \\
 & CRU p85 & \textbf{0.99} & \textbf{1.00} & \textbf{0.23} & 0.63 & \textbf{0.38} & \textbf{0.53} & 0.85 & \textbf{0.39} & 0.39 & 0.83 & 0.91 & 0.58 & \textbf{0.55} \\
 & CRU p90 & 0.98 & 1.00 & 0.34 & 0.74 & 0.36 & 0.44 & \textbf{0.88} & 0.37 & 0.32 & \textbf{0.94} & \textbf{0.96} & \textbf{0.60} & 0.53 \\
 & CRU p95 & 0.98 & 0.99 & 0.33 & \textbf{0.77} & 0.32 & 0.37 & 0.84 & 0.37 & 0.33 & 0.94 & 0.95 & 0.56 & 0.50 \\
 & CRU p99 & 0.97 & 0.92 & 0.37 & 0.41 & 0.37 & 0.49 & 0.52 & 0.37 & \textbf{0.39} & 0.51 & 0.53 & 0.58 & 0.45 \\
\midrule
\multirow{5}{*}{Llama} & Original & 0.01 & 0.01 & -- & 0.76 & 0.33 & 0.41 & 0.88 & 0.39 & 0.34 & 0.98 & 0.99 & 0.56 & 0.52 \\
 & CRU p85 & 0.31 & 0.23 & 0.49 & 0.74 & 0.34 & 0.44 & 0.86 & 0.38 & 0.35 & \textbf{0.98} & \textbf{0.99} & \textbf{0.55} & 0.53 \\
 & CRU p90 & 0.25 & 0.18 & 0.48 & \textbf{0.77} & 0.35 & 0.45 & \textbf{0.88} & 0.38 & 0.35 & 0.98 & 0.98 & 0.55 & 0.53 \\
 & CRU p95 & \textbf{0.67} & \textbf{0.79} & \textbf{0.47} & 0.53 & \textbf{0.49} & \textbf{0.68} & 0.79 & \textbf{0.41} & \textbf{0.44} & 0.59 & 0.66 & 0.55 & 0.55 \\
 & CRU p99 & 0.54 & 0.54 & 0.49 & 0.62 & 0.45 & 0.64 & 0.81 & 0.40 & 0.41 & 0.72 & 0.81 & 0.55 & \textbf{0.56} \\
\midrule
\multirow{5}{*}{Phi} & Original & 0.07 & 0.14 & -- & 0.41 & 0.31 & 0.24 & 0.80 & 0.33 & 0.32 & 0.95 & 0.82 & 0.40 & 0.41 \\
 & CRU p85 & 0.58 & 0.84 & \textbf{0.24} & 0.20 & 0.30 & 0.21 & 0.58 & 0.30 & 0.23 & 0.77 & 0.38 & 0.33 & 0.30 \\
 & CRU p90 & 0.42 & 0.71 & 0.33 & 0.27 & 0.31 & 0.24 & 0.73 & 0.33 & 0.30 & 0.81 & 0.40 & \textbf{0.37} & 0.36 \\
 & CRU p95 & 0.39 & 0.62 & 0.35 & \textbf{0.30} & \textbf{0.31} & \textbf{0.26} & \textbf{0.79} & \textbf{0.33} & \textbf{0.32} & \textbf{0.85} & \textbf{0.53} & 0.37 & \textbf{0.38} \\
 & CRU p99 & \textbf{0.72} & \textbf{0.87} & 0.32 & 0.01 & 0.30 & 0.24 & 0.13 & 0.32 & 0.27 & 0.18 & 0.11 & 0.37 & 0.08 \\
\midrule
\multirow{5}{*}{DeepSeek} & Original & 0.25 & 0.33 & -- & 0.79 & 0.32 & 0.28 & 0.89 & 0.32 & 0.28 & 0.67 & 0.64 & 0.42 & 0.42 \\
 & CRU p85 & \textbf{0.79} & \textbf{0.89} & \textbf{0.28} & \textbf{0.83} & 0.31 & 0.25 & 0.87 & 0.31 & 0.25 & 0.62 & 0.21 & 0.38 & 0.35 \\
 & CRU p90 & 0.68 & 0.87 & 0.31 & 0.76 & 0.32 & 0.27 & 0.87 & \textbf{0.32} & \textbf{0.28} & 0.62 & \textbf{0.22} & 0.41 & \textbf{0.36} \\
 & CRU p95 & 0.57 & 0.88 & 0.32 & 0.80 & \textbf{0.32} & \textbf{0.27} & \textbf{0.88} & 0.32 & 0.28 & \textbf{0.66} & 0.16 & \textbf{0.43} & 0.35 \\
 & CRU p99 & 0.68 & 0.83 & 0.34 & 0.63 & 0.32 & 0.27 & 0.85 & 0.31 & 0.27 & 0.52 & 0.17 & 0.40 & 0.34 \\
\bottomrule
\end{tabular}}
\end{table}

For the \texttt{TOFU} benchmark, we adhere to the recommended hyperparameters in the paper (\cite{maini2024tofu}) for all baselines.  For \our, we perform a small hyperparameter study on $p$ in the setting of \texttt{TOFU} \texttt{forget01}, illustrated in \Cref{fig:cru_percentile_ablation}. All metrics for this same ablation are reported in \Cref{tab:tofu_99_cru_ablations}.

For \texttt{RWKU}, we strictly adhere to the best hyperparameters found by PURGE \cite{zaradoukas2026reinforcement} for all baselines to ensure a fair comparison. The paper itself \cite{cao2024rwku} reports the number of warmup steps (20) before applying unlearning. For PURGE, since we use the released checkpoints, there are no hyperparameters to tune. For \our, we perform a small grid search on the learning rate and the value of $\lambda$ on a 3-identities held-out set. We sweep $\lambda$ at the default $p = 95$ and then sweep $p$ at the $\lambda$ the first sweep picks.

\section{Full Results on benchmarks}\label{apx:full_results}

\subsection{TOFU}



\Cref{fig:forget_quality} plots the Forget Quality from \texttt{TOFU} across all models and settings. The forget quality is essentially the p-value of the KS test against the Gold Model (see \Cref{tab:benchmark_metrics}). On \texttt{forget01}, \our\ achieves a Forget Quality above the $p=0.05$ threshold for all models except Llama (which is still very close). Only NPO and SimNPO can equal or surpass \our\ in this metric. On \texttt{forget10}, specifically Phi, \our\ and GD are the only methods able to surpass the threshold. 

\begin{figure}
    \centering
\includegraphics[width=\linewidth]{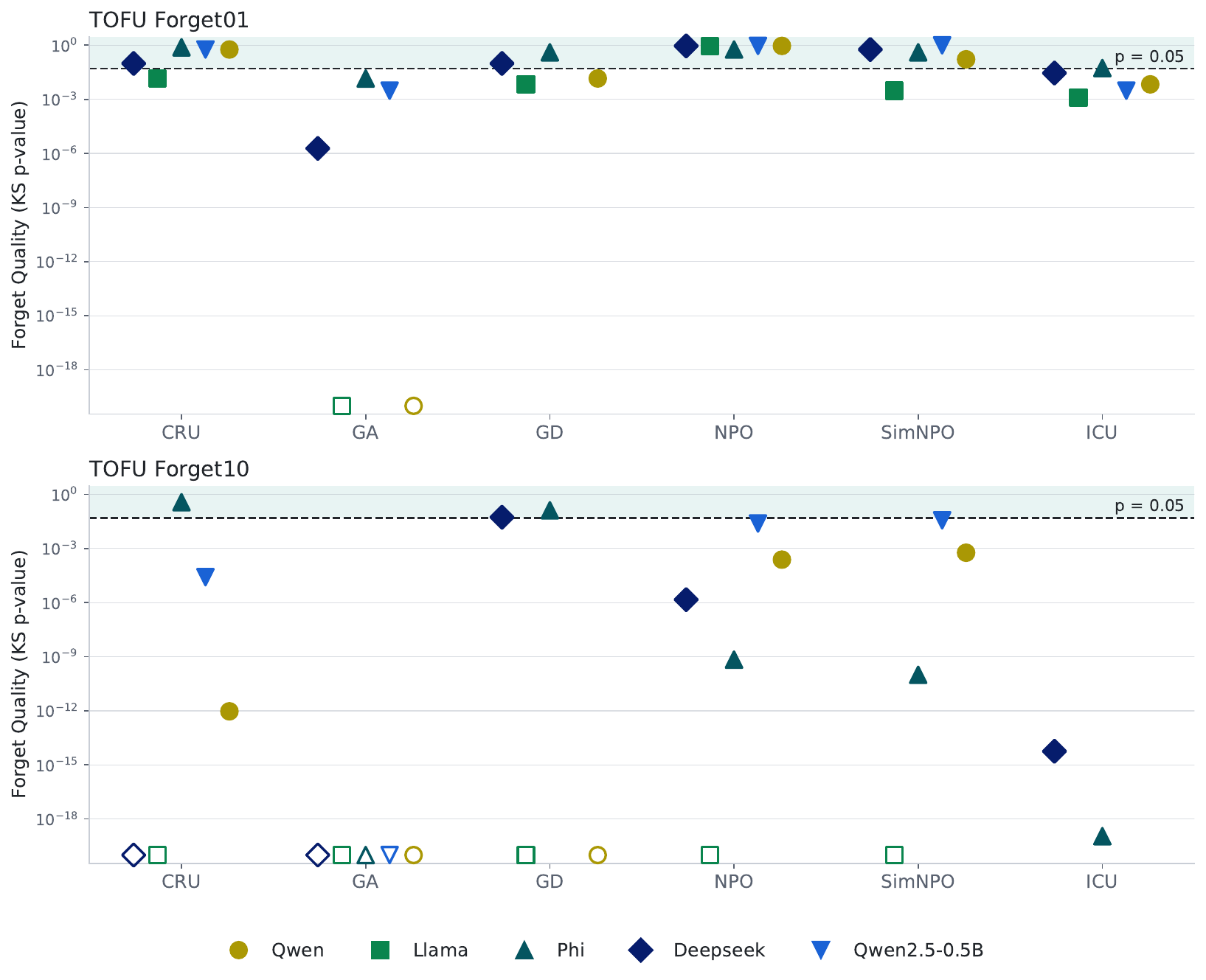}
    \caption{\texttt{TOFU}'s Forget Quality across all models and settings. Values above $\geq 0.05$ are statistically indistinguishable from the Gold Model.}
\label{fig:forget_quality}
\end{figure}

\subsection{RWKU}

\begin{figure}[t]
  \centering
  \includegraphics[width=\textwidth]{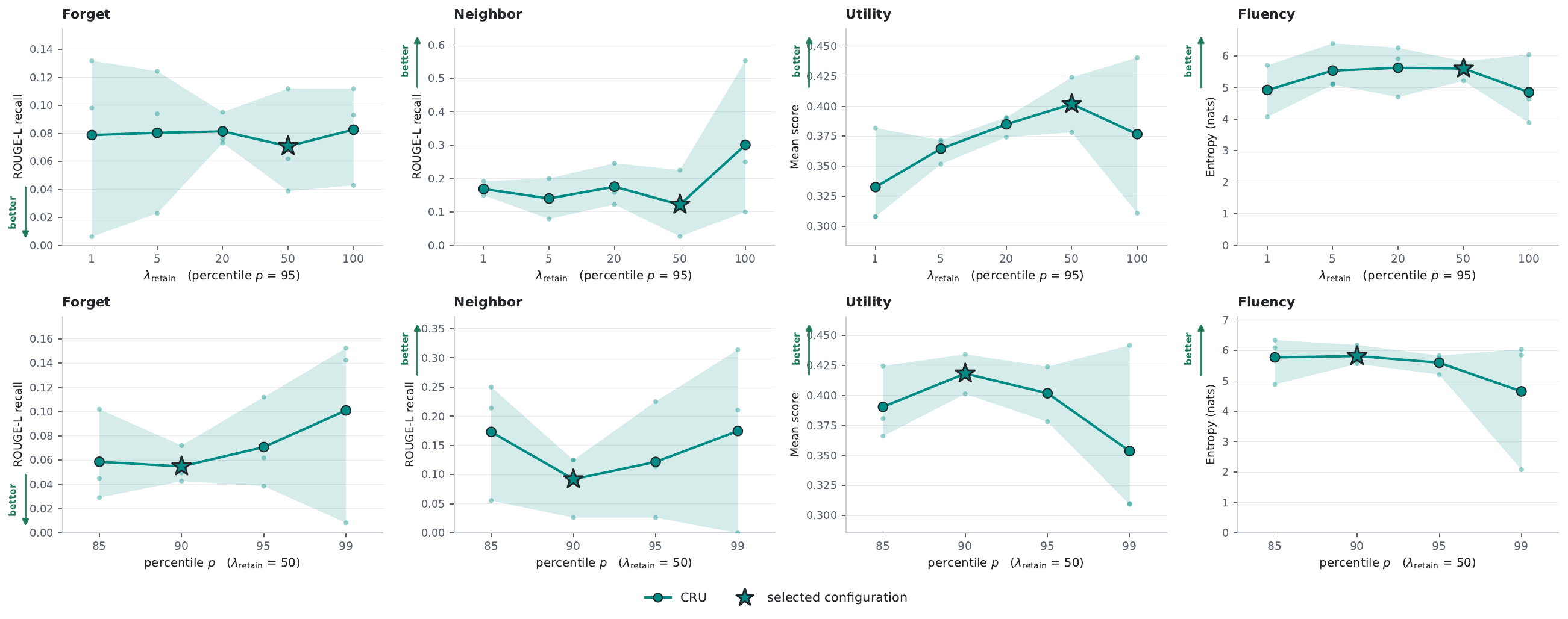}
  \caption{Hyperparameter selection for CRU on \texttt{RWKU} on Phi-3, measured on the three held-out targets (Beyonc\'e, Evel Knievel, Jim Morrison) that are disjoint from the test set. Top row: $\lambda$ at percentile $p = 95$. Bottom row: $p$ at $\lambda = 50$. The line is the mean over the three targets, the dots are the targets themselves, and the band is their range.}
  \label{fig:rwku-hparams}
\end{figure}

Figure \ref{fig:rwku-hparams} shows the hyperparameter selection for RWKU on Phi-3. Forgetting is saturated everywhere. Mean forget ROUGE-L stays between $0.055$ and $0.101$ across all nine configurations, compared to $0.484$ for the original model, so it does not separate them, and the choice has to be made based on what that forgetting costs.

Along $\lambda$, mean utility rises from $0.333$ at $\lambda = 1$ to $0.402$ at $50$ and then falls back to $0.377$ at $100$. 

Along $p$, the value $90$ is the best point on both axes simultaneously, with the lowest forget score ($0.055$) and the highest utility ($0.418$). The failure at $p = 99$ is the informative end of that sweep. Gating a fifth as many neurons does not make the intervention gentler, because the router compensates by closing harder on the units it still has, resulting in worse forgetting ($0.101$) and lower utility ($0.354$). Following this analysis, we select $\lambda = 50$ and $p = 90$ for testing the full set on RWKU for Phi-3.

\begin{figure}[t]
  \centering
  \includegraphics[width=\textwidth]{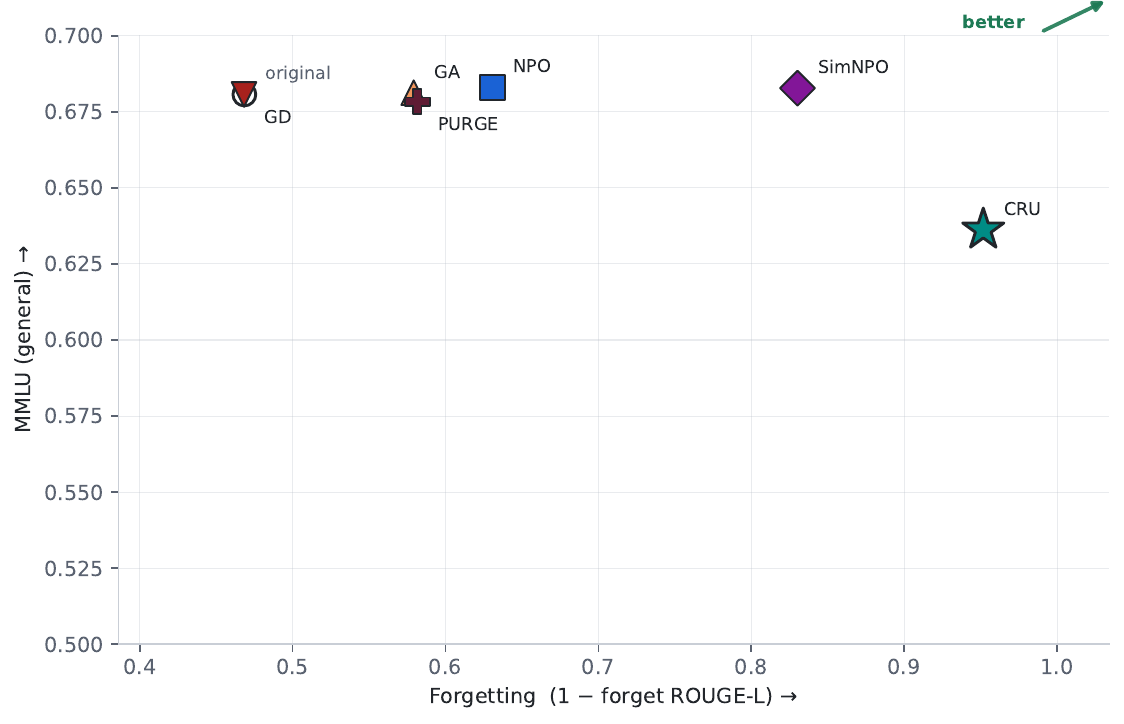}
  \caption{Forgetting against retained general knowledge on \texttt{RWKU} on Phi-3, both as means over the twenty targets. The open circle is the original model.}
  \label{fig:rwku-tradeoff}
\end{figure}

\Cref{fig:rwku-tradeoff} shows the tradeoff. \our\ is alone on the right at $0.952$ forgetting, with SimNPO next at $0.830$ and everything else between $0.468$ and $0.631$. It pays for that with MMLU, at $0.636$ against an original $0.681$, while the baselines all stay within a point of the original. No method dominates \our, in the sense that none reaches both more forgetting and higher MMLU.

\our\ takes 24~minutes per target, against 52 for NPO, 43 for GA and SimNPO, and 15 for GD (\Cref{fig:lunar-cost}). Of \ our 24 minutes, localization accounts for about half a minute: the concept neurons come from a single forward pass over the forget corpus and train nothing, and the rest is spent fitting the routing modules. PURGE is not present here because we fetch the published checkpoints. Please refer to the original PURGE paper for running times \citep{zaradoukas2026reinforcement}.

\subsubsection{Resources}

\begin{figure}[t]
  \centering
  \includegraphics[width=\textwidth]{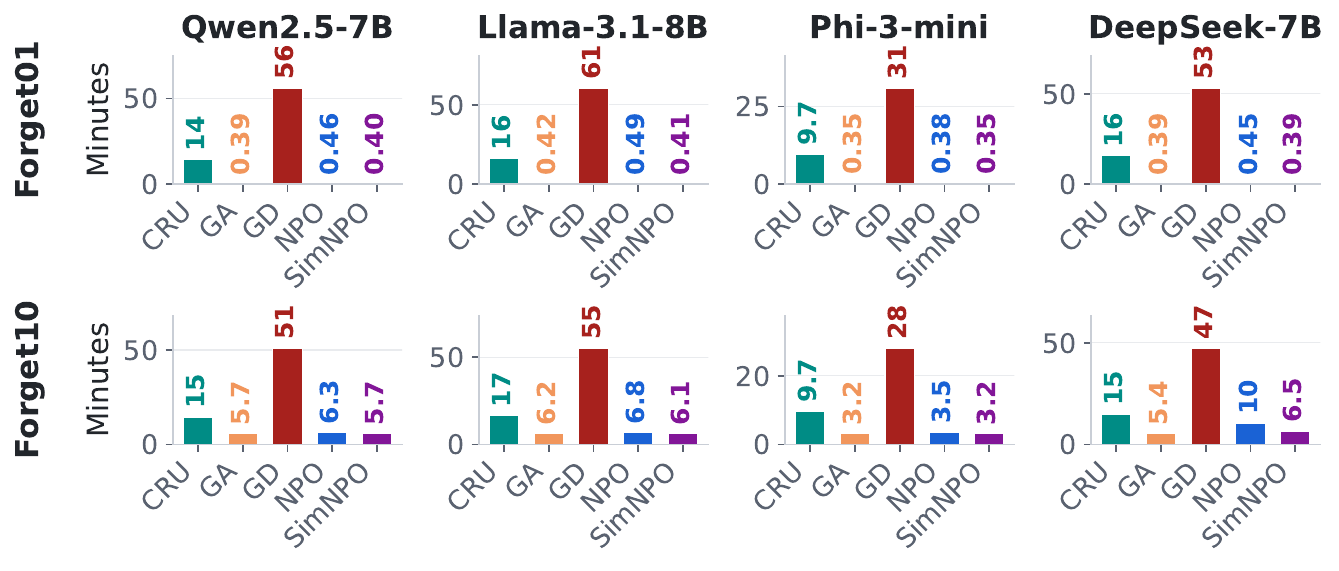}
  \caption{Runtime on both \texttt{TOFU} splits for all models.}
  \label{fig:full-runtime}
\end{figure}

\Cref{fig:full-runtime} shows the full runtime results on both \texttt{TOFU} splits across all models. \our\ is several times faster than GD, but slower than other baselines, especially on \texttt{Forget01}. However, as shown in \S\ref{sec:results}, \our\ is faster than most baselines on RWKU, and uses much less memory across all benchmarks. Moreover, using \our\ is still orders of magnitude faster than retraining an LLM from scratch.
\subsection{Full Ablations}\label{apx:ablations}

\begin{figure}[t]
  \centering
  \includegraphics[width=\textwidth]{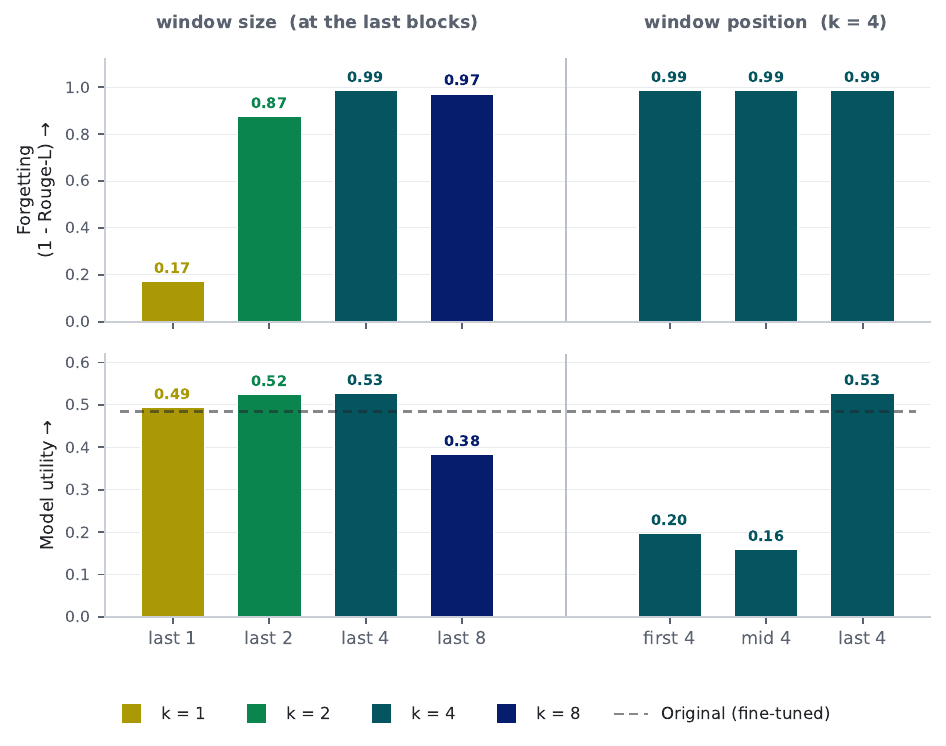}
  \caption{Ablation on the position and number of blocks where \our\ selects neurons.}
  \label{fig:block-ablation}
\end{figure}

\subsubsection{Overlap of high-variance neurons}\label{apx:overlap}

Phase 1 of \our\ selects neurons by the empirical variance $\sigma_k^2(j)$ of their activations over the forget set. Variance, however, is in part a property of a neuron. A natural objection is therefore that \our\ finds the generically high-variance units of layer $k$ regardless of what we asked the model to forget. This would mean that the gate in Phase 2 succeeds by suppressing them rather than by suppressing the concept. 

To disprove this, we run the following experiment: we run Phase 1 twice on the same fine-tuned weights, once on $40$ forget-set examples and once on $40$ retain-set examples, and measure the overlap in neurons that appear in both lists.  Write $C_k^{f}$ for the concept neuron set that \our\ computes, over $n_f = 40$ forget-set examples, and $C_k^{r}$ for the set that the identical procedure returns when those examples are replaced by $n_r = 40$ retain-set examples, with the same frozen fine-tuned weights, the same layer $k$, and the same $\alpha$. We aim to compute the overlap coefficient $|C_k^{f} \cap C_k^{r}| / |C_k^{f}|$.

\Cref{fig:concept-overlap} shows the results. Averaged over the $K$ layers, the two lists share $27\%$ of their neurons on Llama, $38\%$ on DeepSeek, $46\%$ on Phi, and $59\%$ on Qwen: well short of being the same list. The same ordering appears without any threshold at all, in the rank Spearman correlation between the two sets of variances ($\rho = 0.18$, $0.37$, $0.44$, $0.46$). Thus, we can claim that the vast majority of the neurons that \our\ gates are specific to the forget set; a sizable minority is not, and depends on the inherent variance of each neuron.

\subsubsection{Ablation on Block size and position}

\Cref{fig:block-ablation} shows the ablation on the number and position of blocks where \our\ selects the most variant neurons. The ablation was run on \texttt{TOFU} \texttt{forget10} on Qwen-2.5-7B. On the left column, Forgetting and Model Utility when selecting the last $\{1,2,4,8\}$ blocks to select neurons. Our choice of $k=4$ is the best both in Forgetting and Model Utility. Clearly, selecting more blocks (e.g., 8) means that neurons in earlier layers (and thus encoding more general knowledge) are gated and suppressed, thereby hindering the model's utility.

On the right, Forgetting and Model Utility when the window position changes. While Forgetting remains roughly the same (possibly due to model collapse and catastrophic forgetting), selecting the first 4 blocks or the middle 4 blocks severely hurts model utility, for the same reason described before. As such, we found that selecting the last $k=4$ layers is optimal.

\begin{figure}[t]
  \centering
  \includegraphics[width=\textwidth]{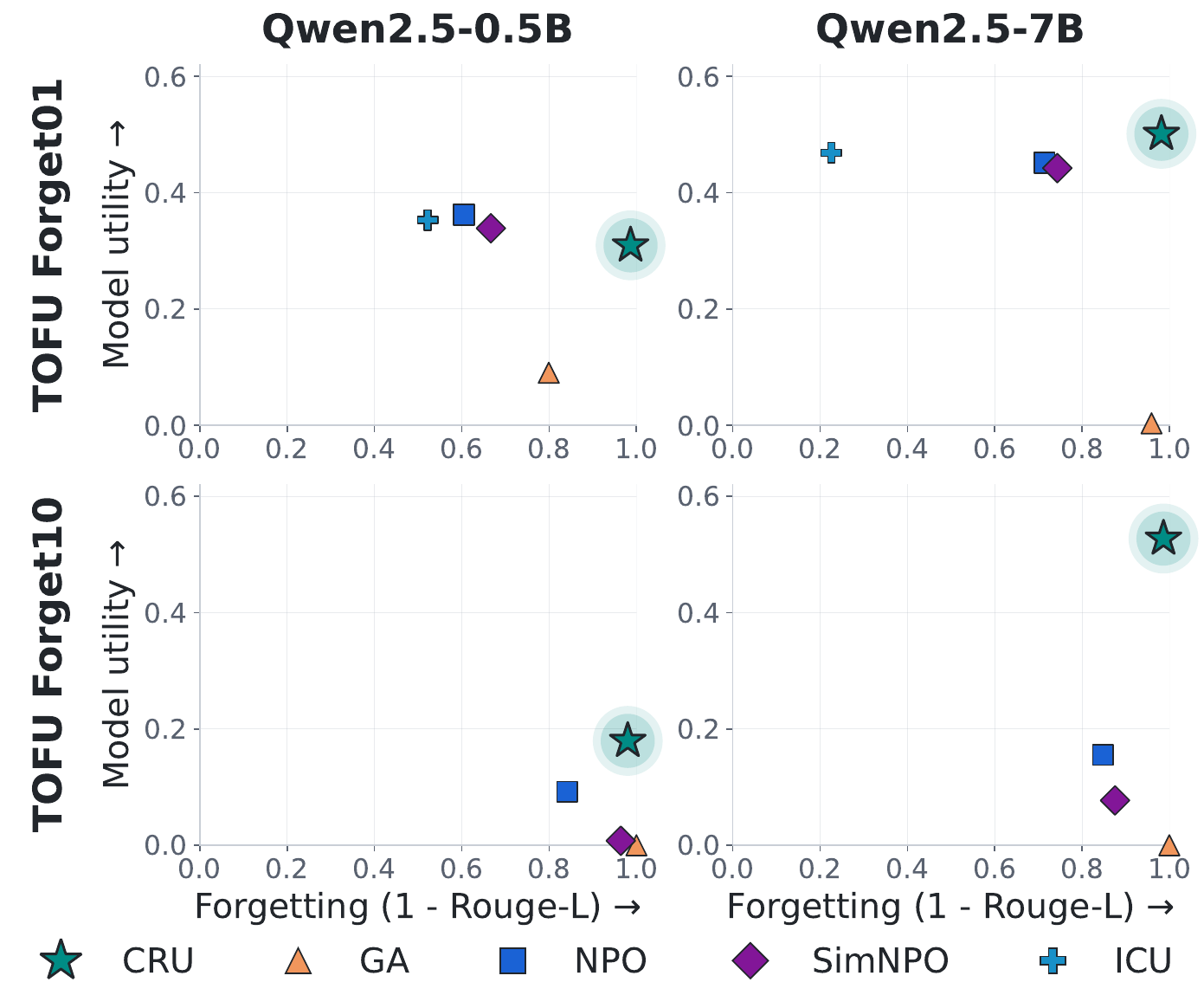}
  \caption{Ablation on the size of the model, using Qwen-2.5 with 7 billion parameters and half a billion parameters.}
  \label{fig:size-ablation}
\end{figure}

\subsubsection{Ablation on Model Size}

We also ablate the model size to see if \our\ is succeeding only on big models, or if results can be generalized. To test this, we employ the same Qwen 2.5 model in both its 7-billion-parameter (used in our experiments) and 0.5-billion-parameter versions. We perform the test on both \texttt{TOFU} \texttt{forget01} and \texttt{forget10}. Results are visualized in \Cref{fig:size-ablation}, following the same schema as \Cref{fig:pareto_grid}.

\our\ has the best Forgetting-Utility tradeoff across all settings, for both models, and both \texttt{TOFU} splits, surpassing all baselines. The success of \our\ is therefore not merely a byproduct of model size, but can be generalized to smaller models too.



\end{document}